\definecolor{codepurple}{rgb}{0.58,0,0.82}
\definecolor{backcolour}{rgb}{0.95,0.95,0.92}
\lstdefinestyle{pytorchstyle}{
    backgroundcolor=\color{backcolour},
    commentstyle=\color{red},
    keywordstyle=\color{codepurple},
    stringstyle=\color{orange},
    basicstyle=\ttfamily\footnotesize,
    breakatwhitespace=false,
    breaklines=true,
    captionpos=b,
    keepspaces=true,
    numbers=left,
    numbersep=5pt,
    showspaces=false,
    showstringspaces=false,
    showtabs=false,
    tabsize=2,
    language=Python,
    morekeywords={Tensor,Module,nn,Sequential,Linear,ReLU}
}
\newtheorem{theorem}{Theorem}
\theoremstyle{thmstyletwo}%
\theoremstyle{thmstylethree}%
\newtheorem{definition}{Definition}%
\def\BibTeX{{\rm B\kern-.05em{\sc i\kern-.025em b}\kern-.08em
    T\kern-.1667em\lower.7ex\hbox{E}\kern-.125emX}}
\begin{document}

\title{Knowledge Distillation Under Ideal Joint Classifier Assumption}

\author{Huayu Li$^{1}$ \quad Xiwen Chen$^{2}$ \quad Gregory Ditzler$^{3}$  \quad Janet Roveda$^{1}$ \quad  Ao Li$^{1}$ \\
$^1$The University of Arizona \quad $^2$Clemson University \quad $^3$ EpiSci\\
\texttt{\{hl459,meilingw,aoli1\}@arizona.edu}\\
\texttt{xiwenc@g.clemson.edu}\\
\texttt{gregory.ditzler@gmail.com}
}
\date{}

\maketitle

\begin{abstract}
Knowledge distillation constitutes a potent methodology for condensing substantial neural networks into more compact and efficient counterparts. Within this context, softmax regression representation learning serves as a widely embraced approach, leveraging a pre-established teacher network to guide the learning process of a diminutive student network. Notably, despite the extensive inquiry into the efficacy of softmax regression representation learning, the intricate underpinnings governing the knowledge transfer mechanism remain inadequately elucidated. This study introduces the 'Ideal Joint Classifier Knowledge Distillation' (IJCKD) framework, an overarching paradigm that not only furnishes a lucid and exhaustive comprehension of prevailing knowledge distillation techniques but also establishes a theoretical underpinning for prospective investigations. Employing mathematical methodologies derived from domain adaptation theory, this investigation conducts a comprehensive examination of the error boundary of the student network contingent upon the teacher network. Consequently, our framework facilitates efficient knowledge transference between teacher and student networks, thereby accommodating a diverse spectrum of applications.
\end{abstract}

\section{Introduction}

The advancements in deep neural networks in the field of computer vision are undeniable. Deep neural networks outperform classical machine learning algorithms in various tasks such as image recognition~\cite{krizhevsky2017imagenet,he2016deep}, semantic segmentation~\cite{long2015fully,chen2017deeplab, zhao2017pyramid}, and object detection~\cite{girshick2014rich, redmon2016you}. Unfortunately, the network's performance is usually determined by the number of parameters it has. Although more parameters result in a higher network capacity, they also lead to an increase in computational complexity and storage costs, which limits the use of deep neural networks on resource-limited hardware. As a result, reducing the size of a deep neural network while retaining the high performance of the large network is a critical to many applications.

Lightweight models, such as those discussed in~\cite{howard2017mobilenets,zhang2018shufflenet}, aim to reduce computational complexity through efficient network architecture design. Model compression techniques have been developed to address the complexity issue beyond architecture, including parameter pruning~\cite{han2015deep}, low-rank factorization \cite{tai2015convolutional}, and knowledge distillation \cite{bucilu2006model,hinton2015distilling}. The goal of knowledge distillation through model compression is to transfer the ``dark'' knowledge from a large, highly-accurate pre-trained teacher network to a small, high-speed student network with lower accuracy. 
Under knowledge distillation, the student's network is trained using the teacher's soft labels for guidance \cite{hinton2015distilling}, or using ``hints''  from the teacher's hidden layers \cite{romero2014fitnets} to produce  higher performance from the student than training the student's network with hard target labels alone.

An intuitive approach to improve knowledge distillation is to have the student model mimic the teacher model as closely as possible. Recent works in knowledge distillation can be divided into two main categories: representation and logits distillation. Representation-based techniques aim to extract richer information from the intermediate layers of the teacher model \cite{zagoruyko2016paying, chen2021cross, chen2021distilling}, or to better align the teacher and student features \cite{tung2019similarity, tian2019contrastive}. On the other hand, logits-based distillation techniques \cite{zhou2021rethinking, zhao2022decoupled} focus more on the statistical significance of the output logits and probabilities between the teacher and student networks.

Softmax Regression Representation Learning (SRRL) \cite{yang2021knowledge} is a type of logits-based distillation that matches the teacher and student logits, while using the teacher's classifier applied to the student's feature (penultimate layer output) and teacher's feature at the same time. SRRL offers a new perspective to improve the student's performance by not only using the teacher's logits, but also teacher's classifier.
A related method, Simple Knowledge Distillation (SimKD)  \cite{chen2022knowledge}, further improves SRRL by allowing the student to use a frozen, pre-trained teacher classifier for additional supervision. Despite the improvement achieved by SRRL-based methods, the mechanism that achieves this performance is not well understood because they are based on intuition rather than a solid theoretical explanation.

This paper explains the reasoning for using the teacher classifier as supervision in softmax regression-based distillation via a theory of domain adaptation. We summarized the existing methods and provides a theoretical analysis to understand the regularization effect of these methods. Our work presents an error bound that limits the student network's error by the teacher's error and two disagreement terms. 
As the error bounds are derived, the central motivation for SRRL and SimKD can be explained. We also introduce the concept of the ``Ideal Joint Classifier assumption'' to tighten the upper bound and better understand how to translate the theory of the upper bound to algorithms that improve the task of knowledge distillation. Using this bound, we presents a unified framework called Ideal Joint Classifier Knowledge Distillation (IJCKD), to connect SRRL and SimKD. The error bound is derived using a proof scheme from a theory of domain adaptation~\cite{ben2010theory}.

In conclusion, the authors' main contributions are:
\begin{itemize}
    \item Introducing a theoretical examination of techniques rooted in softmax regression-based methods, providing insights into their operation within the context of knowledge distillation. 
    \item Establishing an error bound that establishes a connection between teacher and student errors in the softmax regression setting, thereby offering a holistic approach to existing methods.
    \item Presenting the IJCKD framework, which unifies various softmax regression-based methods and opens avenues for future research extensions. 
\end{itemize}
\section{Related works}

The history of compressing machine models via training smaller models under the supervision of large models can be traced back to \cite{bucilu2006model}. In \cite{bucilu2006model}, MUNGE presented a method train a compact neural network to mimic the large and complex ensembles of classifiers. Taking advantage of the universal approximation property of neural networks \cite{hornik1989multilayer,scarselli1998universal}, MUNGE ``compresses'' the ensemble into smaller size networks. The concept of knowledge distillation was formally defined in \cite{hinton2015distilling}, which defined the softmax output of the teacher network as the ``knowledge''. The student is supervised by hard labels from the oracle, and the soft labels from teacher outputs to achieve higher accuracy than a network trained without distillation. The knowledge distillation objective can be formulated as: $l^{all} = l^{ce}+\lambda\cdot l^{kd}$, where $l^{ce}$ is the cross entropy between student output probabilities and the hard label, and the $l^{kd}$ is the Kullback-Leibler (KL) divergence between the student and teacher output probabilities weighted by a factor $\lambda$. Typically, the $l_{kd}$ term has a temperature factor $\tau$ that softens the output probabilities. The following distillation works focus on better use of the teacher's logits. The work proposed in \cite{kim2021comparing} comprehensively compared the difference and association between KL divergence between the softened output probabilities and mean square error (MSE) between the output logits prior to the softmax activation. Based upon their findings, Kim et al. proposed replacing the original KL divergence term with the MSE between the student and teacher logits \cite{kim2021comparing}. Their experimental results showed that the logits loss provided better distillation and performance. Further, decoupled knowledge distillation (DKD)~\cite{zhao2022decoupled} divides the KL divergence into the target and non-target loss according to the ground truth label. Weighted soft label distillation (WSLD)~\cite{zhou2021rethinking} analyzed the regularization effect of knowledge distillation through a bias-variance decomposition and proposed to rescale the knowledge distillation loss with respect to the regularization samples. \textcolor{black}{Guo et al. \cite{gou2023multi} proposed MTKD-SSR to use multi-stage learning and self-reflection in knowledge distillation. Their results showed significant improvements over existing methods in various computer vision tasks.}

Feature matching (e.g., feature based distillation or representation distillation) aims to have the student network approximate intermediate features of the teacher's network. FitNet~\cite{romero2014fitnets} directly uses the teacher's features as hints to supervise the student's feature representations. A generic feature distillation optimization task can be defined as: $l^{all} = l^{ce}+\lambda\cdot l^{fm}$, where $l^{fm}$ is typically the distance between the intermediate features from teachers and students. Several representative feature based distillation methodologies were proposed after FitNet. 
Overhaul of Feature Distillation (OFD) \cite{heo2019comprehensive} investigated the effect of the distillation feature position and distance function. Relational Knowledge Distillation (RKD) \cite{park2019relational} designed a new loss from distance- and angle-wise perspectives to extract relations between the feature representations. Contrastive Representation Distillation (CRD) \cite{tian2019contrastive} deployed contrastive learning to maximize the mutual information between the student and teacher representations. 

The focus of this paper is on a specific type of knowledge distillation, which we refer to as Softmax Regression-based Distillation. SRRL \cite{yang2021knowledge} uses the teacher's classifier as additional supervision to regularize the student's features. This approach ensures that the student features produce the same outputs as the teacher features when scored by the teacher classifier. By being supervised by the teacher classifier, SRRL achieves outstanding performance with simply two MSE loss terms between the student and teacher penultimate layer features, and the logits produced by the student and teacher features when passed through the teacher classifier. Following this, SimKD \cite{chen2022knowledge} adopts a more aggressive approach known as the reused teacher classifier. SimKD abandons the student classifier and lets the student use the frozen pre-trained teacher classifier as its own classifier. 
The student's network is trained with the feature matching loss to learn the same features as the teacher, and the teacher's network is frozen. SimKD also proposes a multi-layer convolutional module with little computational overhead to better align the teacher and student features.

\section{Knowledge Distillation Under Ideal Joint Classifier Assumption}

Given two deep neural networks $f_t$ and $f_s$ as the teacher and student networks, respectively. The knowledge distillation task aims to train a student network, $f_s$, from the output of a significantly larger teacher network $f_t$. More generally, we wish the student network to mimic the teacher. Therefore, upper-bounding the student network's error with the teacher's error is essential to study the learning framework of knowledge distillation. 
Let the dataset $\mathcal{D}=\left \{\mathcal{X}, \mathcal{Y}\right \}$ contain $n$ data pairs of inputs $\mathcal{X}=\left \{x_i\right \}^{n}_{i=1}$ and the ground truth labels $\mathcal{Y}=\left \{y_i\right \}^{n}_{i=1}$. A deep neural network $f$ typically consists of a feature extractor $\Phi:\mathcal{X}\rightarrow \mathcal{Z}$ to map the inputs $x$ to the latent representations $z$ and a linear layer (a classifier) $g: \mathcal{Z}\rightarrow \mathcal{O}$ to classify the latent representations $z$ into their output logits $o$. We also denote $p$ as the output probability of the classifier after softmax activation, where $p_k=\frac{exp(o_k)}{\sum_j exp(o_j)}$. 

The error (also called risk) of a neural network $f$ is measured by the disagreement between its predictions and the ground truth labels. Let us define an error $\epsilon(f)$:  
\begin{align}
    \epsilon(f) &= \mathbb{E}_{\mathcal{D}}[|f(x)-y|] = \mathbb{E}_{\mathcal{D}}[|g\circ\Phi(x)-y|] =\epsilon(g\circ\Phi).
    \label{eq:abs loss}
\end{align}
To guarantee student performance under the supervision of both the ground truth labels and the teacher, we seek to limit student error in terms of teacher error. For a student network $f_s$ and teacher $f_t$, we have a basic assumption that the teacher network has a larger capacity than the student, e.g., the teacher has a more significant number of parameters which let the teacher performs better than the student on dataset $\mathcal{D}$. Thus, we could bound the student error in terms of the teacher error. 

\begin{theorem}
\label{theorem_1} 
For a teacher network $f_t=g_t\circ\Phi_t$, a student network $f_s=g_s\circ\Phi_s$, the student error can be bounded as:
\begin{align}
    \epsilon(f_s) &\leq \epsilon(f_t)+\underbrace{\mathbb{E}_{\mathcal{D}}[|g_s\circ\Phi_s(x)-g_t\circ\Phi_s(x)|]}_{\Delta_1} + \underbrace{\mathbb{E}_{\mathcal{D}}[|g_t\circ\Phi_s(x)-g_t\circ\Phi_t(x)|]}_{\Delta_2}
\end{align}
\end{theorem}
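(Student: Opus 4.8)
The plan is to derive the bound from a pointwise triangle inequality and then take expectations. The key idea, borrowed from the hybrid-classifier arguments in domain-adaptation theory \cite{ben2010theory}, is to insert two auxiliary quantities between the student prediction $g_s\circ\Phi_s(x)$ and the label $y$: the \emph{student} features scored by the \emph{teacher} classifier, $g_t\circ\Phi_s(x)$, and the teacher's own prediction $g_t\circ\Phi_t(x)$. These two intermediate terms are exactly what generate $\Delta_1$ and $\Delta_2$.

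Concretely, first I would fix an arbitrary pair $(x,y)\in\mathcal{D}$ and apply the triangle inequality for $|\cdot|$ twice to telescope:
\begin{align}
    |g_s\circ\Phi_s(x)-y| &\leq |g_s\circ\Phi_s(x)-g_t\circ\Phi_s(x)| \nonumber \\
    &\quad + |g_t\circ\Phi_s(x)-g_t\circ\Phi_t(x)| + |g_t\circ\Phi_t(x)-y|.
\end{align}
This step only uses that $|\cdot|$ satisfies a triangle inequality, so it is insensitive to whether $|\cdot|$ denotes the absolute value on scalar outputs or a norm on the logit space $\mathcal{O}$.

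Next I would take $\mathbb{E}_{\mathcal{D}}[\cdot]$ of both sides. Monotonicity of the expectation preserves the inequality, and linearity splits the right-hand side into $\Delta_1$, $\Delta_2$, and $\mathbb{E}_{\mathcal{D}}[|g_t\circ\Phi_t(x)-y|]$, the last of which equals $\epsilon(f_t)$ by \eqref{eq:abs loss}; the left-hand side is $\epsilon(f_s)$ by the same definition. Rearranging into the stated order completes the argument.

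There is no substantive obstacle in the derivation itself: the only points to check are that $|\cdot|$ obeys the triangle inequality (true for any norm) and that the three terms on the right are finite so that linearity of expectation applies (implicit in their being well-defined). The real content is the \emph{choice} of the two insertion points, which is what the rest of the paper exploits: $\Delta_1$ isolates the disagreement between the two classifier heads evaluated on the same (student) features, and $\Delta_2$ isolates the disagreement between the two feature extractors evaluated through the same (teacher) classifier — precisely the quantities that SRRL and SimKD drive down by supervising the student with the teacher's classifier.
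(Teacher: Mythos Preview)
Your proof is correct and uses the same key idea as the paper: insert the hybrid $g_t\circ\Phi_s(x)$ between the student's prediction and the label, then apply the triangle inequality. The only cosmetic difference is that the paper telescopes at the level of the risks, writing $\epsilon(f_s)=\epsilon(f_t)+[\epsilon(g_s\circ\Phi_s)-\epsilon(g_t\circ\Phi_s)]+[\epsilon(g_t\circ\Phi_s)-\epsilon(g_t\circ\Phi_t)]$ and then bounding each bracket by $\mathbb{E}_{\mathcal{D}}[|\cdot|]$ via the reverse triangle inequality, whereas you apply the triangle inequality pointwise first and take expectations afterward; your route is slightly more direct but otherwise equivalent.
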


\begin{proof}
\begin{align*}
    \epsilon(f_s) &= \epsilon(f_s)+(\epsilon(f_t)-\epsilon(f_t))+(\epsilon(g_t\circ\Phi_s)-\epsilon(g_t\circ\Phi_s))\\
    &\leq \epsilon(f_t)+|\epsilon(g_s\circ\Phi_s)-\epsilon(g_t\circ\Phi_s)|+|\epsilon(g_t\circ\Phi_s)-\epsilon(g_t\circ\Phi_t)| \\
    & \leq \epsilon(f_t)+\underbrace{\mathbb{E}_{\mathcal{D}}[|g_s\circ\Phi_s(x)-g_t\circ\Phi_s(x)|]}_{\Delta_1} + \underbrace{\mathbb{E}_{\mathcal{D}}[|g_t\circ\Phi_s(x)-g_t\circ\Phi_t(x)|]}_{\Delta_2}
\end{align*}
\end{proof}
This error bound consists of three distinct components: 
(a) the first term represents the teacher's risk on the dataset $\mathcal{D}$, 
(b) the second term, $\Delta_1$, captures the difference between the teacher and student classifiers, and 
(c) the third term, $\Delta_2$, measures the discrepancy between the teacher and student models when scoring the same inputs. This bound reveals the requirement for training a good student network under knowledge distillation frameworks. First, the $\Delta_1$ term tells us the learned student's classifier should to be similar to the teacher's classifier, which means that the student's and teacher's classifiers are supposed to have close outputs with respect to the same input features. The $\Delta_2$ term represents that a suitable student feature is expected to lead to similar predictions under the teacher's classifier as the teacher representation. Further, this bound limits the error, but more importantly, it shows us three individual terms that we can seek to minimize to improve the task of knowledge distillation. 

In practice, a pairwise loss function $l$ is used for capturing the disagreements, including the MSE loss and the softmax cross-entropy loss, which are defined as:
\begin{align*}
    &l^{\text{mse}}(y_i,f(x_i))= \left \|y_i - f(x_i) \right \|_2^2,\hspace{1em}
    l^{\text{ce}}(y_i,f(x_i)) = -\text{log}\,p(Y=y_i|x_i)
\end{align*}
With a minor deviation from the original notation, the RHS of the inequality of Theorem \ref{theorem_1} can be (approximately) rewritten as:
\begin{align}
\label{eq:modified rhs}
    \epsilon(f_t)+\underbrace{\mathbb{E}_{\mathcal{D}}\left [ l(g_s\circ\Phi_s(x),g_t\circ\Phi_s(x)) \right ]}_{\Delta_1}+\underbrace{\mathbb{E}_{\mathcal{D}}\left [ l(g_t\circ\Phi_s(x),g_t\circ\Phi_t(x)) \right ]}_{\Delta_2}.
\end{align}
Note that we are not claiming a strict inequality here as we did in Theorem \ref{theorem_1}, and we have substituted the absolute loss in Eq. \eqref{eq:abs loss} with a generic loss $l(\cdot)$. 
At this point, we can make two observations regarding how SRRL and SimKD address the error bound. SRRL addresses the $\Delta_2$ term by using a softmax regression loss, $l^{sr}(g_t\circ\Phi_s(x),g_t\circ\Phi_t(x))$. On the other hand, SimKD mitigates the $\Delta_1$ term by allowing the student to share the teacher's pre-trained classifier, then minimizes the feature matching loss $l^{fm}(\Phi_s(x), \Phi_t(x))$. Based on these observations, we first address the $\Delta_1$ term as our core assumption. The disparity between the teacher and student classifiers should be small. Hence, we can assume that there exists an ideal joint classifier that achieves the lowest risk for both the student and teacher representations.

\begin{definition}
The ideal joint classifier of the student and teacher representations on a dataset $\mathcal{D}$ is
\begin{align}
\label{eq:ideal joint classifier}
    \hat{g} = \arg\min_{g} \mathbb{E}_{\mathcal{D}} [l(y,g\circ\Phi_t(x))+l(y,g\circ\Phi_s(x))]
\end{align}
\end{definition}
We assume that there exists an ideal joint classifier that could reach a low risk with both the teacher's and student's representations. The ideal joint classifier assumption provides insight into analyzing the student's performance. The student will be unlikely to learn effectively under the teacher's guidance when there is a large discrepancy between the student's and the teacher's classifiers. Under the ideal joint classifier assumption, the student's risk can be bounded by the teacher's risk and the discrepancy between the outputs of the ideal joint classifier over the student and teacher representations. 
SimKD provides a natural choice for reusing the teacher's classifier as the ideal joint classifier, which we use in our approach. Then, the bound is re-expressed with the ideal joint classifier. This highlights the significance of the ideal joint classifier assumption in understanding the student's performance and serves as a foundation for further analysis. 

Now that we can substitute the ideal joint classifier into the RHS of the modified inequality of Theorem \ref{theorem_1} described in Eq. \eqref{eq:modified rhs} and demonstrate how it allows us to bound the student's error. Given a teacher and a student feature extractor $\Phi_s$ and $\Phi_t$, respectively, and an ideal joint classifier $\hat{g}$, the RHS of the inequality of Theorem \ref{theorem_1} can be further rewritten as:
\begin{align}
\label{eq: rhs with ij}
    \epsilon(\hat{g}\circ\Phi_t)+\mathbb{E}_{\mathcal{D}}\left [ l(\hat{g}\circ\Phi_s(x),\hat{g}\circ\Phi_t(x)) \right ].
\end{align}
We can look at the SRRL learning objective. Recall that SRRL defines its learning objective as follows:
\begin{align}
    f_s = g_s\circ\Phi_s = \arg\min_{g,\Phi} \mathbb{E}_{\mathcal{D}}[l^{ce}(y,g\circ\Phi(x))+\alpha l^{sr}(g_t\circ\Phi_t(x),g_t\circ\Phi(x))]. 
\end{align}
The softmax regression loss, denoted as $l^{sr}$, is the key component of the SRRL's objective, and $\alpha$ controls the trade-off between the losses. 
We rewrite the SRRL objective function with the ideal joint classifier setting the softmax regression loss to cross-entropy loss:
\begin{align}
    \Phi_s= \arg\min_{\Phi} \mathbb{E}_{\mathcal{D}}[ &l^{ce}(y,\hat{g}\circ\Phi(x)) + \alpha l^{ce}(\hat{g}\circ\Phi_t(x),\hat{g}\circ\Phi(x)) ], 
\end{align}
The objective function can be seen as training the student's feature extractor $\Phi_t$ with the smoothed labels based on the teacher's prior over each class. If we followed the original implementations of SRRL~\cite{yang2021knowledge}, and set the $l^{sr}$ be mean square error loss $l^{mse}$, the objective function is defined as:
\begin{align}
\label{eq:IJCKD}
    \Phi_s = \arg\min_{\Phi} \mathbb{E}_{\mathcal{D}}[&l^{ce}(y,\hat{g}\circ\Phi(x))+\alpha l^{mse}(o^s,o^t)], 
\end{align}
where $o^s$ and $o^t$ denote the logits produced by the teacher and student models, respectively. 
Recent work has observed that using MSE loss between the logits is more effective than cross-entropy loss \cite{yang2021knowledge, kim2021comparing}. Then SRRL, which is closely related to SimKD through the incorporation of the ideal joint classifier assumption, can be formalized into the IJCKD framework. However, there is a key difference in the learning objective of IJCKD compared to SimKD, as Eq. \eqref{eq: rhs with ij} and the ideal joint classifier assumption suggest the minimization of both cross-entropy loss between the student outputs and the hard label, as well as the feature/logits matching loss simultaneously. \textcolor{black}{
Algorithms \ref{alg:srrl}, \ref{alg:simkd}, and \ref{alg:IJCKD} show the implementations of SRRL, SimKD, and IJCKD, respectively. 
Further, Figure \ref{fig:IJCKD} provides an intuitive illustration of distinctions between SRRL, SimKD, and IJCKD.}

\begin{algorithm}
\begin{lstlisting}[style=pytorchstyle]
#input: x, hard label: y
#student network: net_s, teacher network: net_t
#connector for align teacher and student channels
logits_s, feat_s = net_s(x)
feat_s = connector(feat_s)
logits_t, feat_t = net_t(x)
logits_aux = net_t.fc(avg_pool(feat_s))

loss_fm = F.mse_loss(feat_s, feat_t)
loss_lm = F.mse_loss(logits_aux, logits_t)
loss_ce = F.cross_entropy(logits_s,y)

#alpha, beta are used to scale the losses

loss_srrl = loss_ce + alpha*loss_lm + beta*loss_fm

\end{lstlisting}
\caption{PyTorch Code for SRRL}
\label{alg:srrl}
\end{algorithm}

\begin{algorithm}
\begin{lstlisting}[style=pytorchstyle]
#input: x
#student network: net_s, teacher network: net_t
#connector for align teacher and student channels
_, feat_s = net_s(x)
feat_s = connector(feat_s)
_, feat_t = net_t(x)

loss_fm = F.mse_loss(feat_s, feat_t)

loss_simkd = loss_fm
\end{lstlisting}
\caption{PyTorch Code for SimKD}
\label{alg:simkd}
\end{algorithm}
\begin{algorithm}
\begin{lstlisting}[style=pytorchstyle]
#input: x, hard label: y
#student network: net_s, teacher network: net_t
#connector for align teacher and student channels
_, feat_s = net_s(x)
feat_s = connector(feat_s)
logits_t, feat_t = net_t(x)
logits_s = net_t.fc(avg_pool(feat_s))

loss_lm = F.mse_loss(logits_s, logits_t)
loss_ce = F.cross_entropy(logits_s,y)

#alpha is used to scale the losses

loss_ijckd = loss_ce + alpha*loss_lm

\end{lstlisting}
\caption{PyTorch Code for IJCKD}
\label{alg:IJCKD}
\end{algorithm}

\begin{figure}[!htbp]
    \centering
    \includegraphics[width=.95\textwidth]{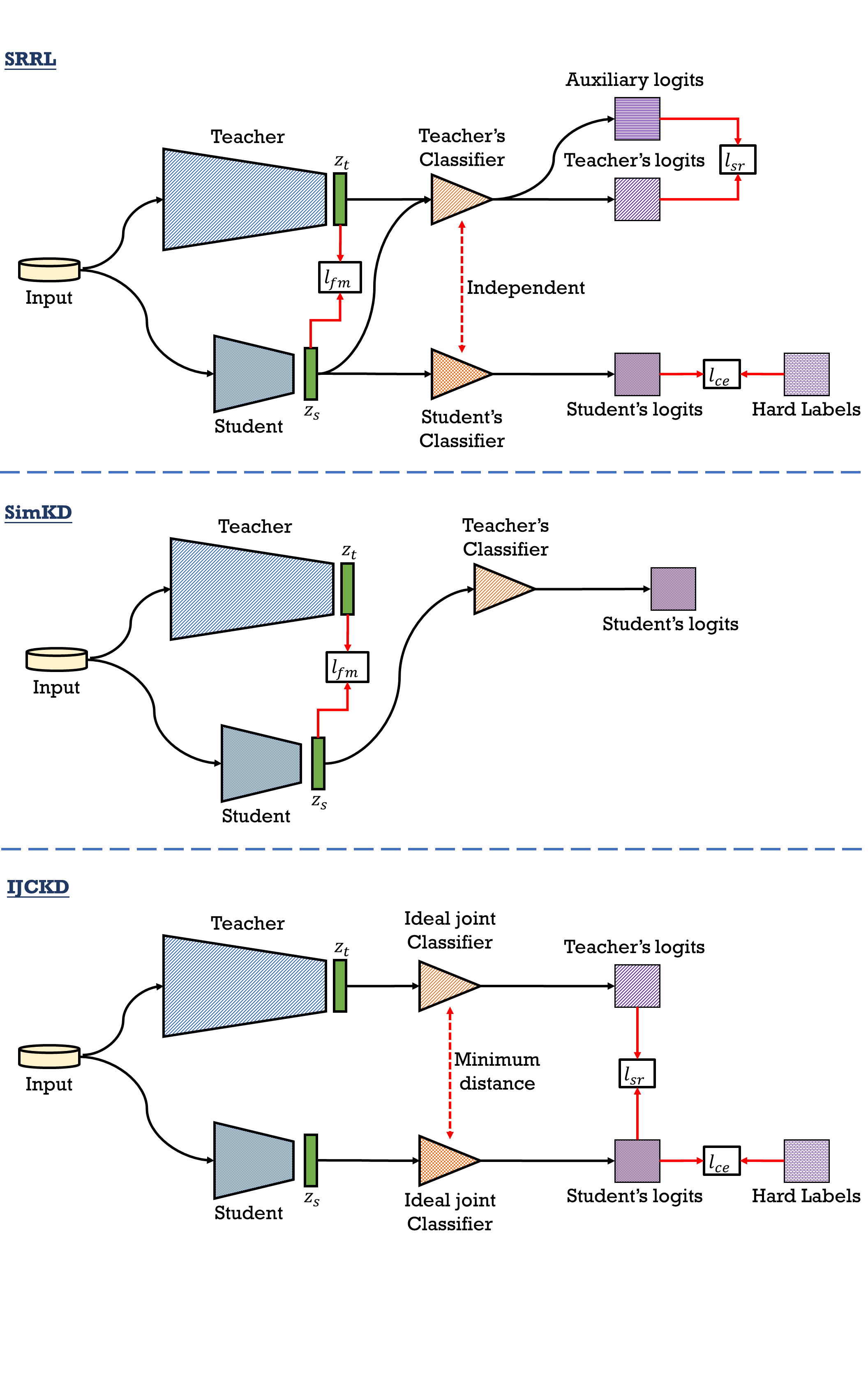}
    \caption{\textcolor{black}{An illustration to show the differences between the SRRL, SimKD, and IJCKD frameworks.}}
    \label{fig:IJCKD}
\end{figure}

\section{Experiments}

This section presents experiments that demonstrate the effectiveness of our proposed IJCKD approach. We first present the results on standard benchmark datasets and compare them with several representative state-of-the-art approaches (CIFAR-100 \cite{krizhevsky2009learning}, and ImageNet \cite{deng2009imagenet}). We compare IJCKD with SimKD in a separate subsection since the connector's design is a key factor in SimKD's performance. Additionally, we conduct experiments where a student network with only a pre-trained teacher classifier to verify whether the pre-trained teacher classifier can be the optimal classifier for the student network. We implemented the IJCKD in Python and the Pytorch framework \cite{paszke2019pytorch}. The experiments were conducted on a workstation with an Intel i9-12900 processor, and two Nvidia RTX 3090 graphics cards with 24GB RAM.

\subsection{Results on CIFAR-100}
We followed the standard training procedure adopted by previous works \cite{zhou2021rethinking, tian2019contrastive, zhao2022decoupled}. Specifically, we train the networks for 240 epochs, and the learning rate is decayed multiplying 0.1 at the 150th, 180th, and 210th epochs, respectively. The initial learning rate is set to 0.01 when ShuffleNet is the backbone \cite{zhang2018shufflenet, ma2018shufflenet}, and an initial learning rate to 0.05 for all other backbones. We used SGD with Nesterov's momentum of 0.9, a mini-batch size of 64, and a weight decay of $5\times10^{-4}$. 
To align the teacher and student feature channels, we applied a 1$\times$1 convolutional layer followed by batch normalization \cite{ioffe2015batch}, and ReLU activations \cite{nair2010rectified} to the student's features. We report the top-1 classification accuracy in Table \ref{tabel:cifar-100}, and each result is reported over five run average. Our study primarily focuses on IJCKD and SRRL, with additional evaluations of several other recent state-of-the-art methods including FitNet \cite{romero2014fitnets}, KD \cite{hinton2015distilling}, VID \cite{ahn2019variational}, RKD \cite{park2019relational}, PKT \cite{passalis2018learning}, OFD \cite{heo2019comprehensive}, CRD \cite{tian2019contrastive}, WSLL \cite{zhou2021rethinking}, and DKD \cite{zhao2022decoupled}. We also present the reproduced results of SimKD with 1$\times$1 connector. We compare distillation approaches across models with both the same and different architectures. To ensure a fair comparison and showcase the versatility of IJCKD, we set $\alpha=1$ without an exhaustive search as the scaling factor for the logits matching loss in all teacher-student pairs.

Our experimental results offer valuable insights into the effectiveness of IJCKD compared to the SRRL method. Focusing on the differences between these two approaches, it becomes evident that IJCKD consistently outperforms SRRL across various teacher-student pairs evaluated in our study, as detailed in Table \ref{tabel:cifar-100}. For instance, let's consider the ResNet32x4$\rightarrow$ResNet8x4 teacher-student pair, a particularly challenging scenario. Here, IJCKD achieves an impressive top-1 accuracy of 76.52\%, surpassing SRRL by a substantial margin of 0.6\%. These results underscore the remarkable efficacy of IJCKD in knowledge transfer and student network improvement. A key element contributing to IJCKD's superior performance lies in the combination of logits matching and cross-entropy losses under the Ideal Joint Classifier Assumption. By aligning the output logits of the student with those of the teacher, IJCKD enhances the student's ability to capture intricate details and fine-grained information from the teacher model. This effectively reduces the information gap between the teacher and student, resulting in more accurate predictions and higher classification accuracy. It's noteworthy that these improvements are not limited to specific teacher-student pairs; rather, IJCKD consistently demonstrates its superiority across the range of teacher architectures tested in this work. 

This consistency highlights the robustness and general applicability of IJCKD as a knowledge distillation technique. Furthermore, when comparing IJCKD to other knowledge distillation methods in Table \ref{tabel:cifar-100}, we can see that IJCKD often achieves top-1 accuracy that surpasses or closely rivals the best-performing alternatives. This suggests that IJCKD holds great promise for enhancing the performance of student networks across a wide array of architectures and domains. In summary, our experimental findings strongly support the notion that IJCKD, by leveraging the Ideal Joint Classifier Assumption, consistently outperforms SRRL and other knowledge distillation techniques in improving student network accuracy.

\begin{sidewaystable}[!htbp]
\caption{Top-1 accuracy (\%) on CIFAR-100.}
\label{tabel:cifar-100}
\centering
\begin{tabular}{c|ccccc|ccc}
\hline
        & \multicolumn{5}{c|}{Same architecture style}              & \multicolumn{3}{c}{Different architecture style} \\ \hline
Teacher & WRN-40-2 & ResNet56 & ResNet110 & ResNet110 & ResNet32x4 & ResNet32x4     & ResNet32x4     & WRN-40-2        \\
        & 75.61    & 72.34    & 74.31     & 74.31     & 79.42      & 79.42          & 79.42          & 75.61           \\ \hline
Student & WRN-40-1 & ResNet20 & ResNet20  & ResNet32  & ResNet8x4  & ShuffleNetV1   & ShuffleNetV2   & ShuffleNetV1    \\
        & 71.98    & 69.06    & 69.06     & 71.14     & 72.50      & 70.50          & 71.82          & 70.50           \\ \hline
FitNet  & 72.24    & 69.21    & 68.99     & 71.06     & 73.50      & 73.59          & 73.54          & 73.73           \\
KD      & 73.54    & 70.66    & 70.67     & 73.08     & 73.33      & 74.07          & 74.45          & 74.83           \\
VID     & 73.30    & 70.38    & 70.16     & 72.61     & 73.09      & 73.38          & 73.40          & 73.61           \\
RKD     & 72.22    & 69.61    & 69.25     & 71.82     & 71.9       & 72.28          & 73.21          & 72.21           \\
PKT     & 73.45    & 70.34    & 70.25     & 72.61     & 73.64      & 74.10          & 74.69          & 73.89           \\
OFD     & 72.38    & 69.47    & 69.53     & 70.98     & 73.17      & 73.55          & 74.31          & 73.34           \\
CRD     & 74.14    & 71.16    & 71.46     & 73.48     & 75.51      & 75.11          & 75.65          & 76.05           \\
WSLL    & 74.48    & 72.15    & 72.19     & 74.12     & 76.05      & 75.46          & 75.93          & 76.21           \\
DKD     & 74.81    & 71.97    & 72.31     & 74.11     & 76.32      & 76.45          & 77.07          & 76.70           \\ \hline\hline
SRRL    & 74.75    & 71.44    & 71.51     & 73.80     & 75.92      & 75.66          & 76.40          & 76.61           \\
SimKD   & 71.76    & 68.86    & 69.98     & 72.85     & 75.91      & 76.48          & 76.55          & 75.65           \\
IJCKD   & 75.14    & 71.73    & 71.76     & 73.98     & 76.52      & 76.51          & 76.56          & 76.84           \\ \hline
\end{tabular}
\end{sidewaystable}

\subsection{Compare with SimKD}
In this section, we conduct an in-depth analysis comparing the performance of IJCKD and SimKD across various connector architectures. Our primary goal is to assess the adaptability and effectiveness of these methods concerning different connector designs. To this end, we consider three distinct connector architectures: 1$\times$1Conv, 1$\times$1Conv-1$\times$1Conv, and 1$\times$1Conv-3$\times$3Conv-1$\times$1Conv, where the notation denotes the presence and size of convolutional layers in the connector. The results presented in Table \ref{tabel:simkd} offer valuable insights into the comparative performance of IJCKD and SimKD with varying connector architectures. Notably, these findings highlight the versatility and adaptability of IJCKD, emphasizing its ability to excel across different connector designs.

IJCKD consistently outperforms SimKD, achieving superior performance even with a simpler 1$\times$1Conv connector. This result underscores the robustness of the IJCKD approach, as it demonstrates its effectiveness in distillation across various architectures, even when minimal convolutional layers are involved. Conversely, SimKD struggles to match the adaptability of IJCKD. Specifically, SimKD fails to achieve competitive results when confronted with 1$\times$1Conv and 1$\times$1Conv-1$\times$1Conv connectors. This limitation is indicative of SimKD's primary approach, which focuses on training the student to mimic the teacher's representation through a regression-like process. When paired with connectors that differ significantly from the teacher's architecture, SimKD faces challenges in achieving satisfactory performance. In contrast, IJCKD showcases itself as a robust and reliable knowledge distillation method capable of effectively handling varying connector architectures. The improved performance consistently observed across different connector designs reaffirms IJCKD's adaptability and highlights its potential as a versatile tool for knowledge transfer in diverse scenarios.

In summary, our comprehensive evaluation of IJCKD and SimKD across different connector architectures underscores IJCKD's superior adaptability and performance. These findings indicate that IJCKD is well-suited for scenarios involving a wide range of connector designs, making it a valuable choice for knowledge distillation tasks with varying model architectures and complexities.

\begin{table*}[!htbp]
\caption{Comparison with SimKD with different connector architecture.}
\label{tabel:simkd}
\centering
\begin{tabular}{c|cc|cc}
\hline
Teacher/ Student                             & \multicolumn{2}{c|}{\begin{tabular}[c]{@{}c@{}}WRN-40-2\\ WRN-40-1\end{tabular}}  &\multicolumn{2}{c}{\begin{tabular}[c]{@{}c@{}}resnet32x4\\resnet8x4\end{tabular}}\\\hline
Methods           & IJCKD             & SimKD             & IJCKD               & SimKD             \\\hline
1$\times$1Conv                               & 75.14                  &  71.76                 &76.52                     & 75.91              \\
1$\times$1Conv-1$\times$1Conv                & 75.33                  & 72.23                  &77.10                     & 76.21            \\
1$\times$1Conv-3$\times$3Conv-1$\times$1Conv & 75.57                  & 75.48                  &  77.76                   & 77.46           \\
\hline
\end{tabular}
\end{table*}

\subsection{Ablation Study}
\textcolor{black}{In this ablation study, the weight assigned to the cross-entropy loss and logits matching loss was assessed. Initially, we set the logits matching loss ($\alpha_{lm}$) weight to 1 and then we adjust the cross-entropy loss  ($\alpha_{ce}$) weight from 0 to 1. When $\alpha_{ce} = 0$, the student network is trained exclusively under the teacher's logits supervision, a scenario labeled as ``SR only.'' With increasing $\alpha_{ce}$, the influence of the hard ground truth label becomes more pronounced in the learning process. This approach aligns with the ideal joint classifier assumption, which posits that the student network should be trained under both teacher and ground truth supervision. Additionally, the study explored the scenario of learning without the teacher's logits, termed 'CE only', where $\alpha_{lm}$ is set to 0. The results, specifically the top-1 accuracy for the ResNet32x4-ResNet8x4 teacher-student pair on the CIFAR-100 dataset, are presented in Table \ref{tabel:ablation}. Figure \ref{fig:alpha} illustrates the corresponding training and validation top-1 accuracy curves. Notably, both 'SR only' and 'CE only' conditions were less effective than a linear combination of both losses, corroborating the proposed assumption and error bound. Optimal results were achieved with $\alpha_{ce} = 0.2$ and $\alpha_{ce} = 1.0$ for the best validation and training accuracy, respectively. However, it is crucial to acknowledge that the ideal balance of these losses varies depending on the teacher-student pair and dataset. Thus, future research should focus on hyperparameter tuning to identify the most effective loss scales for specific contexts, aiming to enhance distillation performance.}

Further, in addition to exploring the impact of $\alpha$, we delved into the influence of different combinations of feature and logits matching losses in our study. Our default choice, the naive softmax regression loss $l^{sr}$ (MSE loss between teacher and student logits), was compared against the feature matching loss $l^{fm}$ (MSE loss between teacher and student features). Additionally, we examined the combination of both losses. These experiments were conducted with the ResNet32x4-ResNet8x4 pair on CIFAR-100, and the resulting top-1 accuracy outcomes are presented in Table \ref{tabel:ablation}, while corresponding training and validation curves can be found in Figure \ref{fig:comb}. Our results illuminate the significant impact of the chosen matching loss on IJCKD's performance. Intriguingly, the combination of feature and logits matching losses consistently outperformed the individual losses, highlighting the benefits of jointly optimizing both loss components in practical scenarios. This finding underscores the importance of considering and carefully selecting the matching losses when employing IJCKD, as this choice can significantly affect the distillation process's overall effectiveness.

\textcolor{black}{Our study further explored various logits matching losses, with results presented in Table \ref{tabel:loss_func}. This comparison encompassed MSE, negative cosine similarity, and cross-entropy as logits matching losses. The scale for both MSE and cross-entropy was set to 1, whereas for negative cosine similarity, we assigned an $\alpha_{lm}$ value of 10. Our findings indicated that negative cosine similarity, with its adjusted scale factor, consistently surpassed MSE and cross-entropy in achieving higher top-1 accuracy for both teacher-student pairings. Notably, the cross-entropy loss underperformed compared to the other two methods, aligning with observations reported in the SRRL paper \cite{yang2021knowledge}. In summary, this ablation study underscores the significance of the appropriate combination of logits matching loss and cross-entropy loss. It also highlights the effectiveness of varying logits matching losses in different distillation contexts.}

\begin{table*}[!htbp]
\caption{Top-1 accuracy for different loss scale and combination.}
\label{tabel:ablation}
\centering
\begin{tabular}{c|cccccc}
\hline
\multirow{2}{*}{\begin{tabular}[c]{@{}c@{}}$\alpha_{ce}$\\ top-1\end{tabular}} & SR only              & $0.1$            & $0.2$            & $0.5$            & $1.0$               & CE only              \\
                                                                               & 75.91            & 76.36            & 76.61            & 76.09            & 76.52               & 74.46                \\\hline
\multirow{2}{*}{\begin{tabular}[c]{@{}c@{}}Loss comb\\ top-1\end{tabular}}     & \multicolumn{2}{c}{$l^{ce}+l^{sr}$} & \multicolumn{2}{c}{$l^{ce}+l^{fm}$} & \multicolumn{2}{c}{$l^{ce}+l^{sr}+l^{fm}$} \\
                                                                               & \multicolumn{2}{c}{76.52}           & \multicolumn{2}{c}{76.47}           & \multicolumn{2}{c}{76.86}   \\\hline              
\end{tabular}
\end{table*}

\begin{table}[]
\caption{Top-1 accuracy for different logits matching loss.}
\label{tabel:loss_func}
\centering
\begin{tabular}{cc|ccc}
\hline
                                &           & \multicolumn{3}{c}{Logits matching loss}                          \\ \hline
\multicolumn{1}{c|}{Teacher}    & Student   & \multicolumn{1}{c|}{MSE}   & \multicolumn{1}{c|}{Cos Sim} & CE    \\ \hline
\multicolumn{1}{c|}{ResNet32x4} & ResNet8x4 & \multicolumn{1}{c|}{76.52} & \multicolumn{1}{c|}{76.86}   & 75.24 \\ \hline
\multicolumn{1}{c|}{WRN-40-2}   & WRN-40-1  & \multicolumn{1}{c|}{75.14} & \multicolumn{1}{c|}{75.51}   & 74.27 \\ \hline
\end{tabular}
\end{table}

\begin{figure}[!htbp]
    \centering
    \subfigure[training]{
        \includegraphics[width=.30\textwidth]{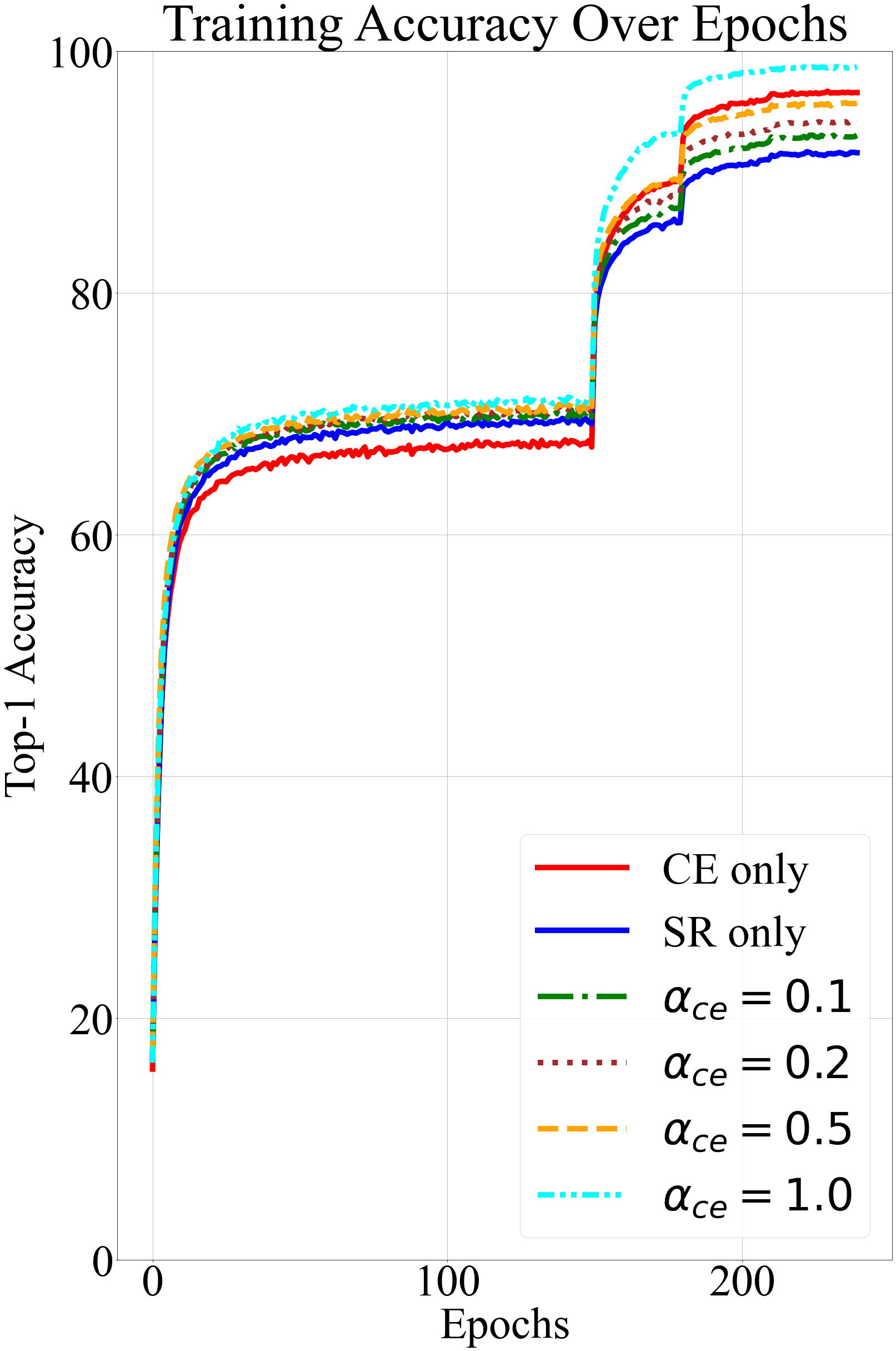}
    }
    \subfigure[validation]{
        \includegraphics[width=.30\textwidth]{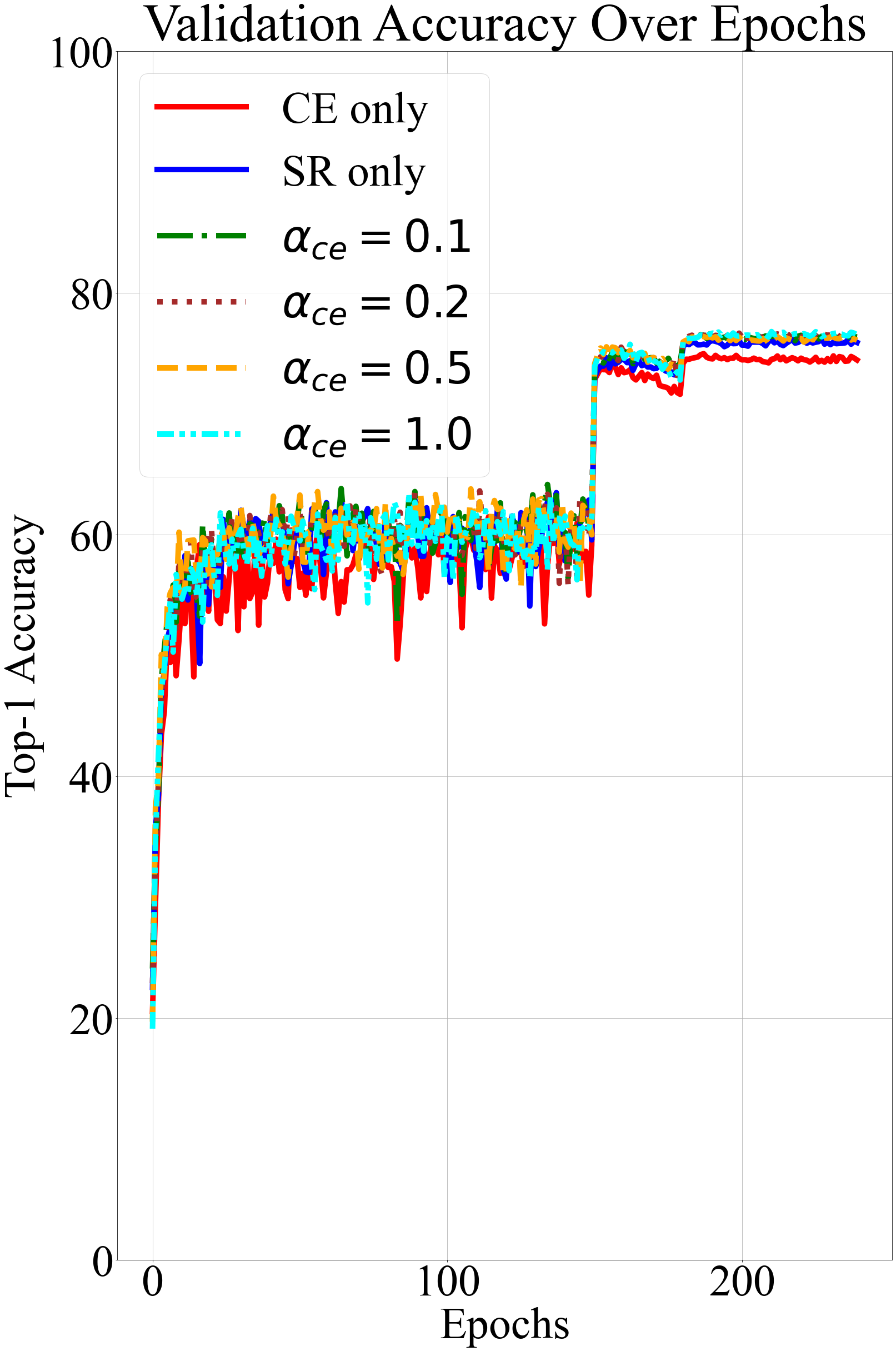}
    }
    \subfigure[\textcolor{black}{Zoom-in view of validation after 150 epochs.}]{
        \includegraphics[width=.30\textwidth]{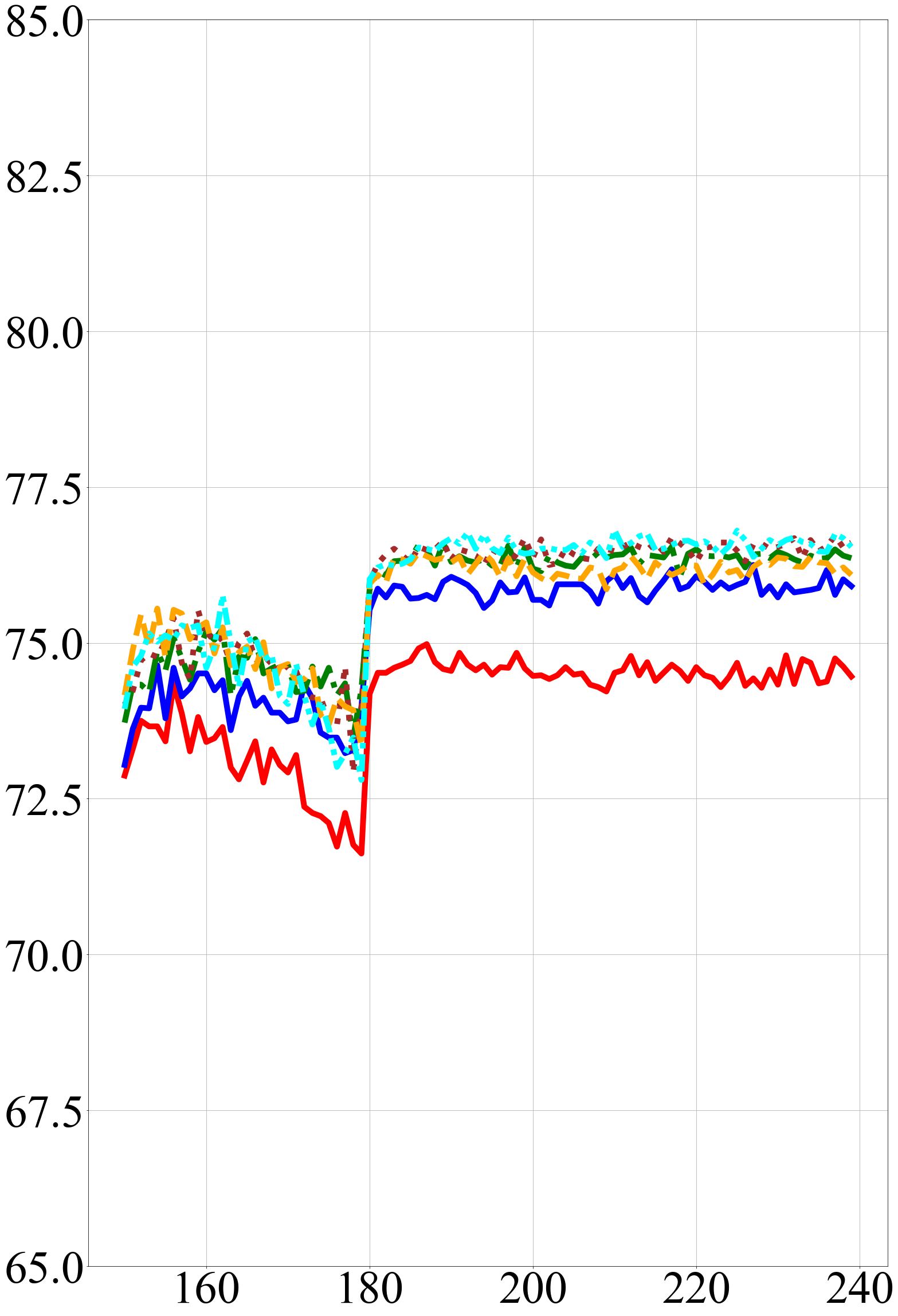}
    }
    \caption{Top-1 accuracy curves for different $\alpha_{ce}$.}
    \label{fig:alpha}
\end{figure}

\begin{figure}[!htbp]
    \centering
    \subfigure[training]{
        \includegraphics[width=.30\textwidth]{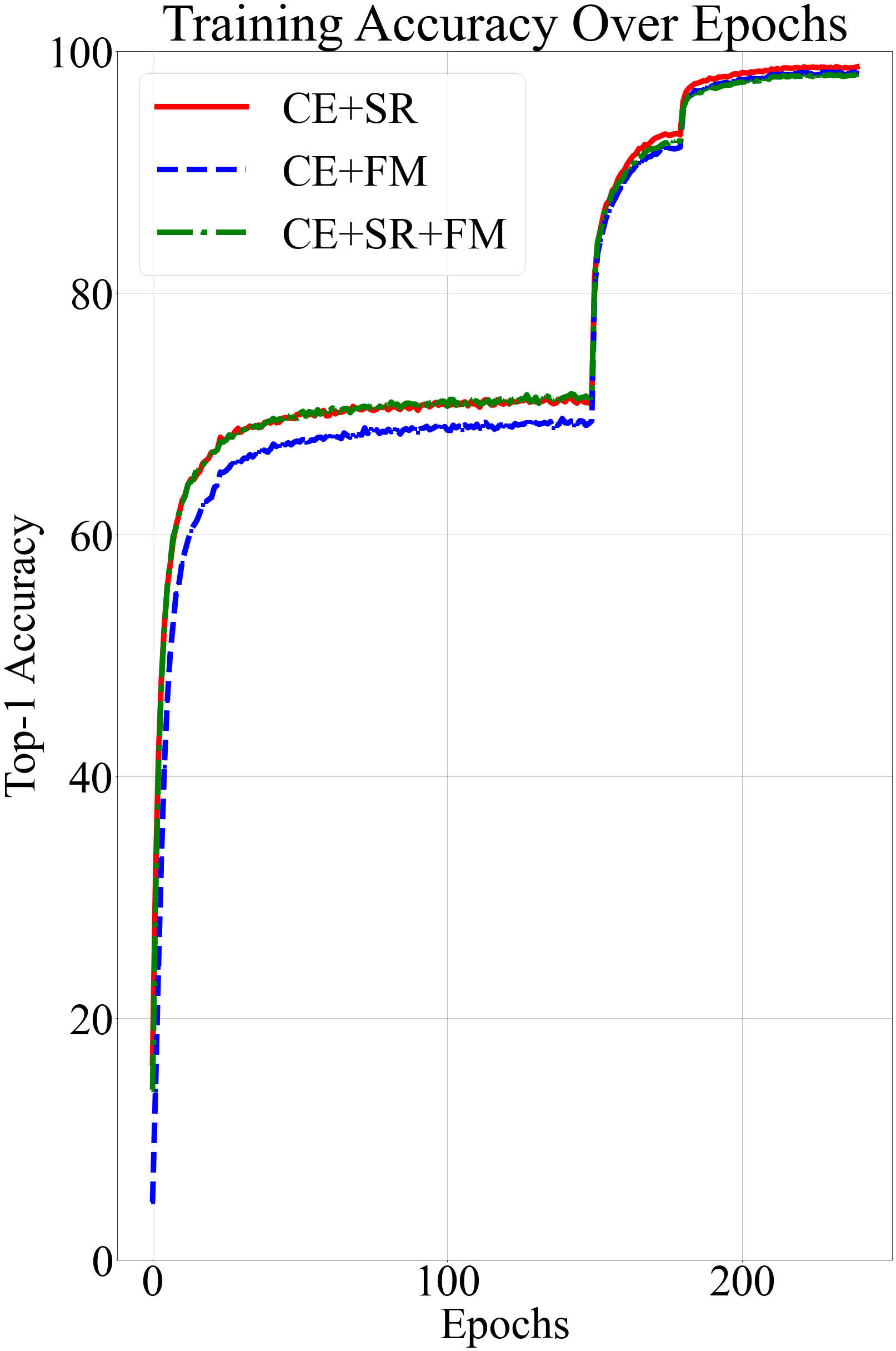}
    }
    \subfigure[validation]{
        \includegraphics[width=.30\textwidth]{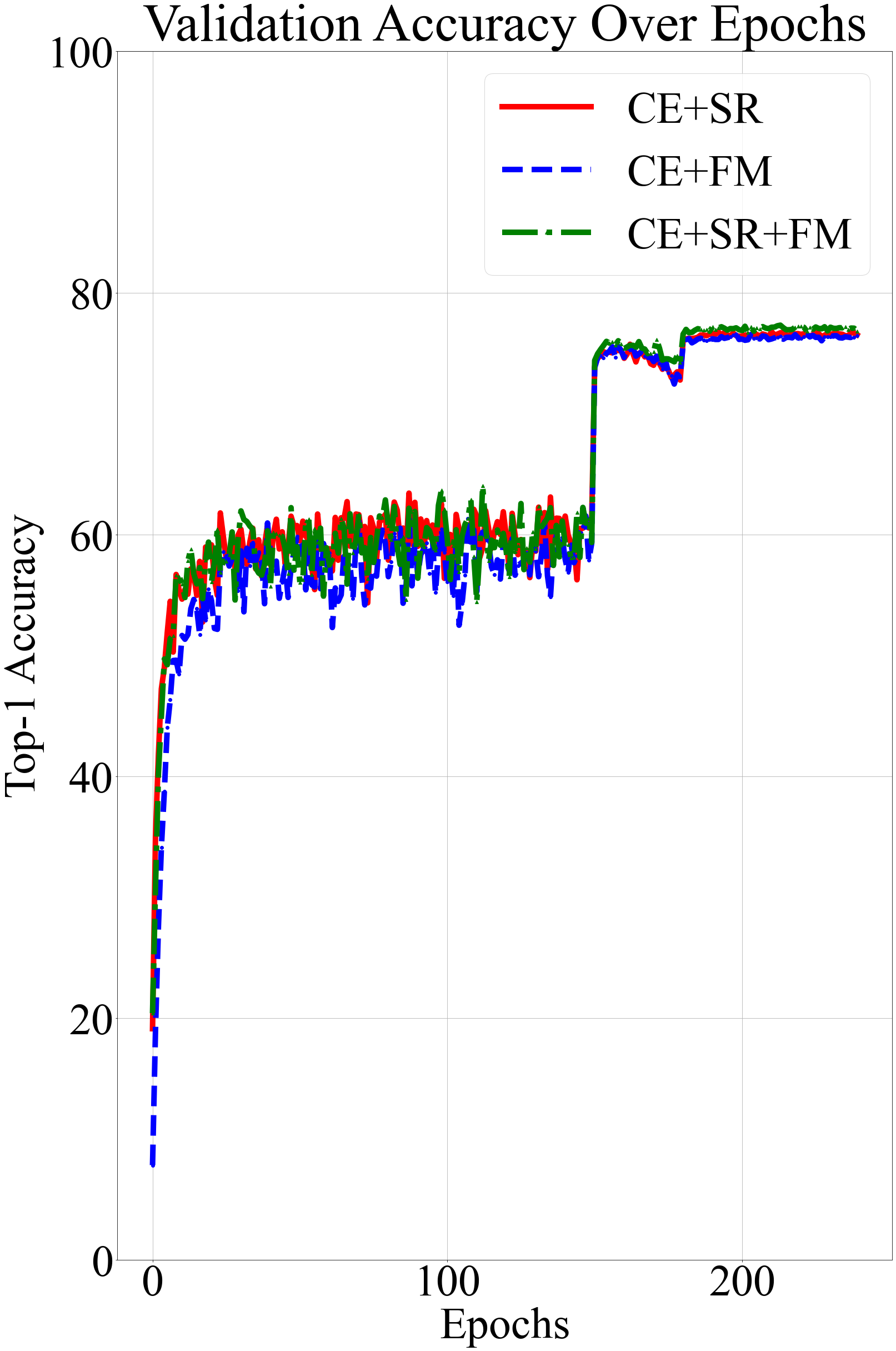}
    }
    \subfigure[\textcolor{black}{Zoom-in view of validation after 150 epochs.}]{
        \includegraphics[width=.30\textwidth]{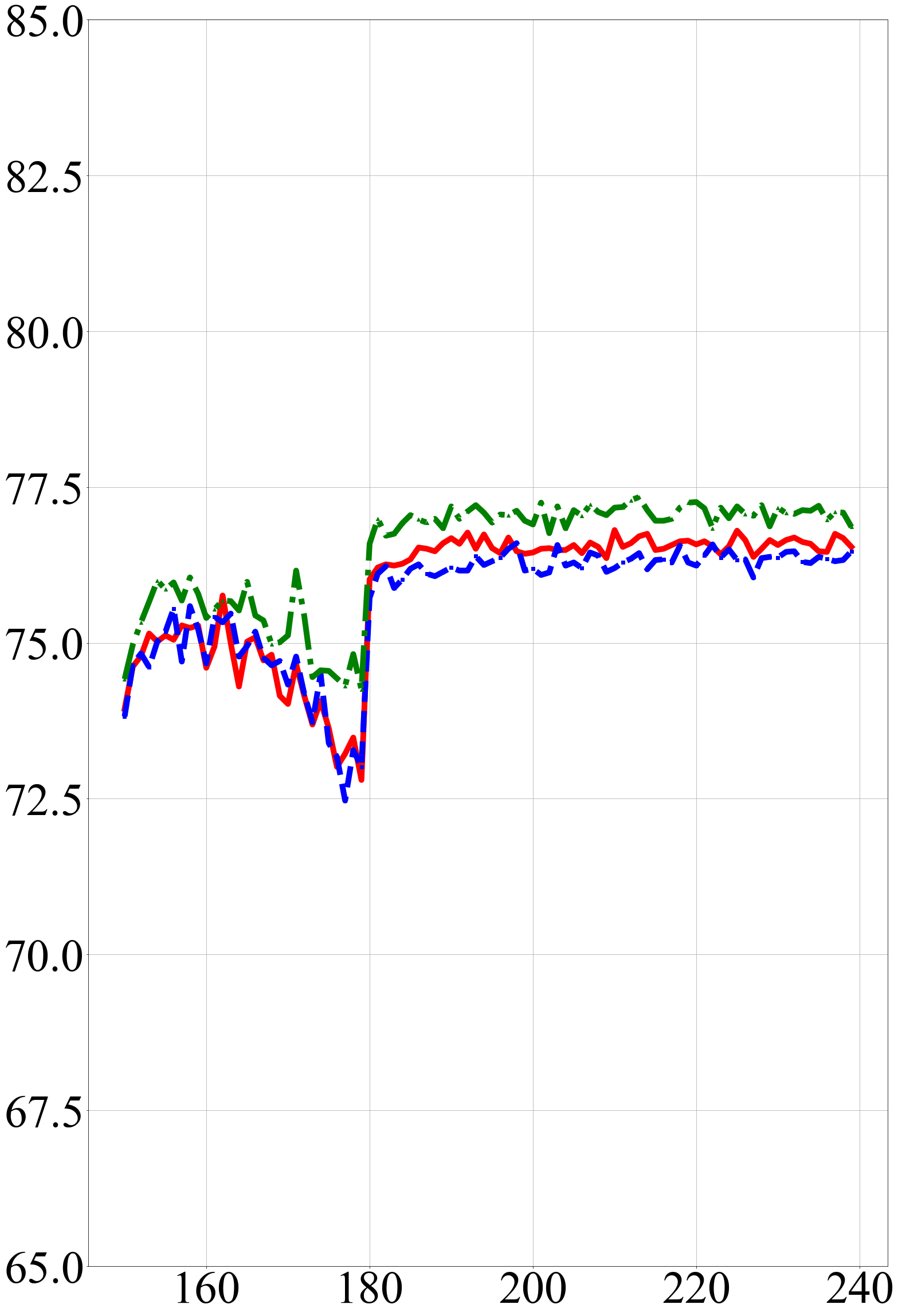}
    }
    \caption{Top-1 accuracy curves for different loss combination.}
    \label{fig:comb}
\end{figure}

\subsection{Results on ImageNet}
\textcolor{black}{In this section, we present the results on the ImageNet \cite{deng2009imagenet} dataset. We used the SGD optimizer with the same momentum parameter as we did for the CIFAR-100 dataset, and applied a weight decay factor of $1 \times 10^{-4}$. For the training process, we utilized a batch size of 512 and commenced with a learning rate of 0.2. Our learning rate schedule involved reducing the learning rate by a factor of 10 at predefined epochs: 30th, 60th, and 90th, completing the training at the 120th epoch. we employed a linear combination of the cross-entropy loss ($l^{ce}$) and the softmax regression loss ($l^{sr}$) as we did for CIFAR-100 training.}

\textcolor{black}{The chosen architecture for the connector layers was a 1×1Conv-3×3Conv-1×1Conv configuration. The selection of distinct settings for the CIFAR-100 and ImageNet experiments was informed by insights from the SimKD settings \cite{chen2022knowledge}. A single-layer transformation may be insufficient for precise alignment due to the significant capability disparity between the teacher and student models. Considering the intricacy and size of the ImageNet dataset, the 1×1Conv-3×3Conv-1×1Conv connector was selected for its ability to achieve accurate alignment between the teacher and student models, thereby enhancing performance. Additionally, as indicated in \cite{chen2022knowledge}, this configuration does not impose considerable computational overhead.}

\textcolor{black}{The experimental results for IJCKD, SRRL, and other comparative methods are detailed in Table \ref{tabel:imagenet}, which illustrates performance metrics such as top-1 and top-5 accuracy. Notably, IJCKD consistently outperformed SRRL, demonstrating its superior efficacy in enhancing model performance on large-scale datasets, including ImageNet. For the ResNet-18 model, IJCKD achieved significant improvements, with increases of 0.51\% in top-1 accuracy and 0.46\% in top-5 accuracy. In the context of the MobileNet model, the improvements were even more pronounced, with IJCKD leading to a 2.18\% enhancement in top-1 accuracy and a 1.46\% rise in top-5 accuracy. These results emphatically affirm the effectiveness of the IJCKD approach, especially in the realm of complex, high-dimensional datasets.}

\begin{table}[!htbp]
\caption{Top-1 and top-5 accuracy (\%) on ImageNet.}
\label{tabel:imagenet}
\centering
\begin{tabular}{c|cc|cc}
\hline
        & \multicolumn{2}{c|}{\begin{tabular}[c]{@{}c@{}}T: ResNet-34\\ S: ResNet-18\end{tabular}} & \multicolumn{2}{c}{\begin{tabular}[c]{@{}c@{}}T: ResNet-50\\ S: MobileNet-v1\end{tabular}} \\
        & top-1                        & top-5                        & top-1                          & top-5                         \\\hline
Teacher & 73.31                        & 91.42                        & 76.16                          & 92.87                         \\
Student & 69.75                        & 89.07                        & 68.87                          & 88.76                         \\\hline
KD      & 70.67                        & 90.04                        & 70.49                          & 89.92                         \\
AT      & 71.03                        & 90.04                        & 70.18                          & 89.68                         \\
CRD     & 71.17                        & 90.13                        & 69.07                          & 88.94                         \\
WSLL    & 72.04                        & 90.70                         & 71.52                          & 90.34                         \\
DKD &71.70 & 90.41 &  72.05
 & 91.05\\
\hline
SRRL    & 71.73                        & 90.60                         & 72.49                          & 90.92                         \\
IJCKD   & \textbf{72.24}                             & \textbf{91.06}                             & \textbf{74.67}                               &\textbf{92.38}  \\
\hline
\end{tabular}
\end{table}

\subsection{Validation of the Ideal Joint Classifier Assumption}

\subsubsection{Can teacher classifier be the ideal joint classifier?} 
To evaluate the ideal joint classifier assumption that states the joint classifier should achieve lower risks on both teacher and student representations, we compared the performance of a student trained with the teacher's classifier to a student trained with its own classifier. Specifically, we evaluated four teacher-student pairs on CIFAR-100 and reported the top-1 accuracy in Table \ref{tabel:classifier}. The student networks were only trained with hard labels but using the pre-trained teacher classifier. The results show that training the student with the teacher and its own classifier achieved close results across all four teacher-student pairs.

This supports our assumption of the existence of the ideal joint classifier. However, we also observed that in some cases, the student trained with a teacher classifier obtained significantly higher accuracy. For example, the ResNet8x4 trained with ResNet32x4's classifier achieved 1.96\% higher accuracy. This suggests that other factors may be at play and there are still unexplored mechanisms behind resuing the pre-trained teacher classifier. Overall, our experiments on CIFAR-100 provide evidence that the ideal joint classifier is a valuable concept in knowledge distillation. Further research is needed to explore its properties and potential applications.

\begin{table}[!htbp]
\caption{Top-1 accuracy for the student backbone with teacher and its own classifier.}
\label{tabel:classifier}
\centering
\begin{tabular}{cc|cc}
\hline
Teacher    & Student   & \begin{tabular}[c]{@{}c@{}}Teacher's\\ classifer\end{tabular} & \begin{tabular}[c]{@{}c@{}}Student's\\ classifer\end{tabular} \\\hline
WRN-40-2   & WRN-40-1  & 72.32               & 71.98               \\
ResNet56   & ResNet20  & 68.91               & 69.06               \\
ResNet110  & ResNet32  & 70.98               & 71.14               \\
ResNet32x4 & ResNet8x4 & 74.46               & 72.50              \\\hline 
\end{tabular}
\end{table}

\subsubsection{Evolution of Classifier Alignment During SRRL Training}
In order to provide additional compelling evidence in support of the Ideal Joint Classifier Assumption, we conducted a series of experimental investigations aimed at observing the evolution of the relationship between the teacher's classifier and the student's classifier, particularly when the student network was trained under the SRRL framework. The objective was to elucidate whether the alignment between these classifiers dynamically evolves during the course of training, thereby further substantiating the Ideal Joint Classifier Assumption.

Figure \ref{fig:dist}, illustrates the Frobenius Norm of the difference in weights between the teacher's classifier and the student's classifier over the course of SRRL training. The observed trends unveil a noteworthy pattern: as the performance of the student network progressively improves, there is a discernible trend towards a reduction in the distance between the teacher's classifier and the student's classifier. This empirical observation suggests that, indeed, as the student becomes more proficient, there is a growing convergence towards a shared classifier, corroborating the Ideal Joint Classifier Assumption.

These findings, in conjunction with our previous comparative analysis of student performance when trained with teacher classifiers versus their own classifiers, collectively underscore the validity of the Ideal Joint Classifier Assumption. It is apparent that the alignment of classifiers between the teacher and student networks benefits the performance of knowledge distillation algorithms, further reinforcing the central concept underlying our framework.

\begin{figure}[!htbp]
    \centering
    \subfigure[Teacher: ResNet32x4; Student: ResNet8x4]{
        \includegraphics[width=.3\textwidth]{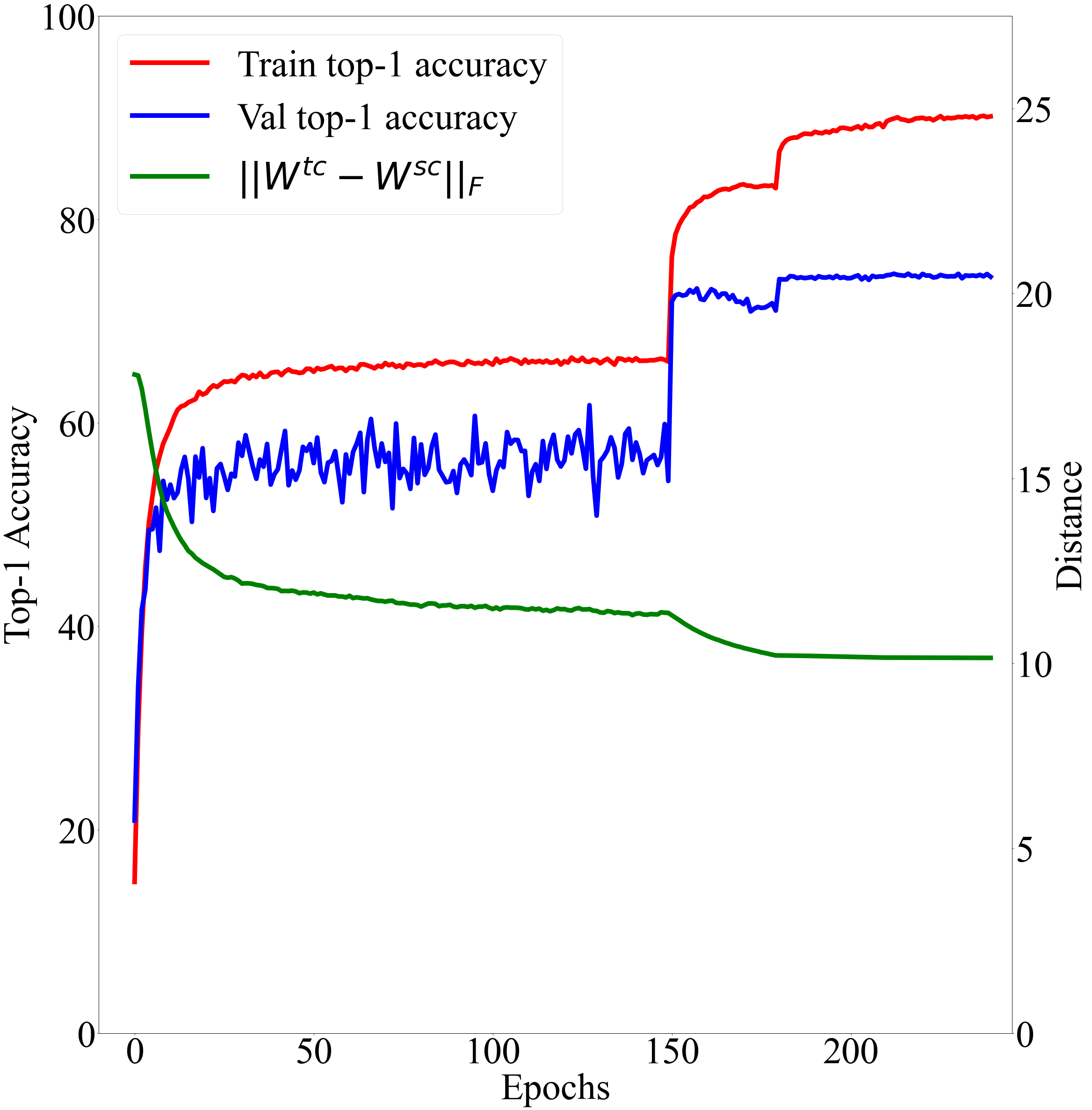}
    }
    \subfigure[Teacher: ResNet110; Student: ResNet32]{
        \includegraphics[width=.3\textwidth]{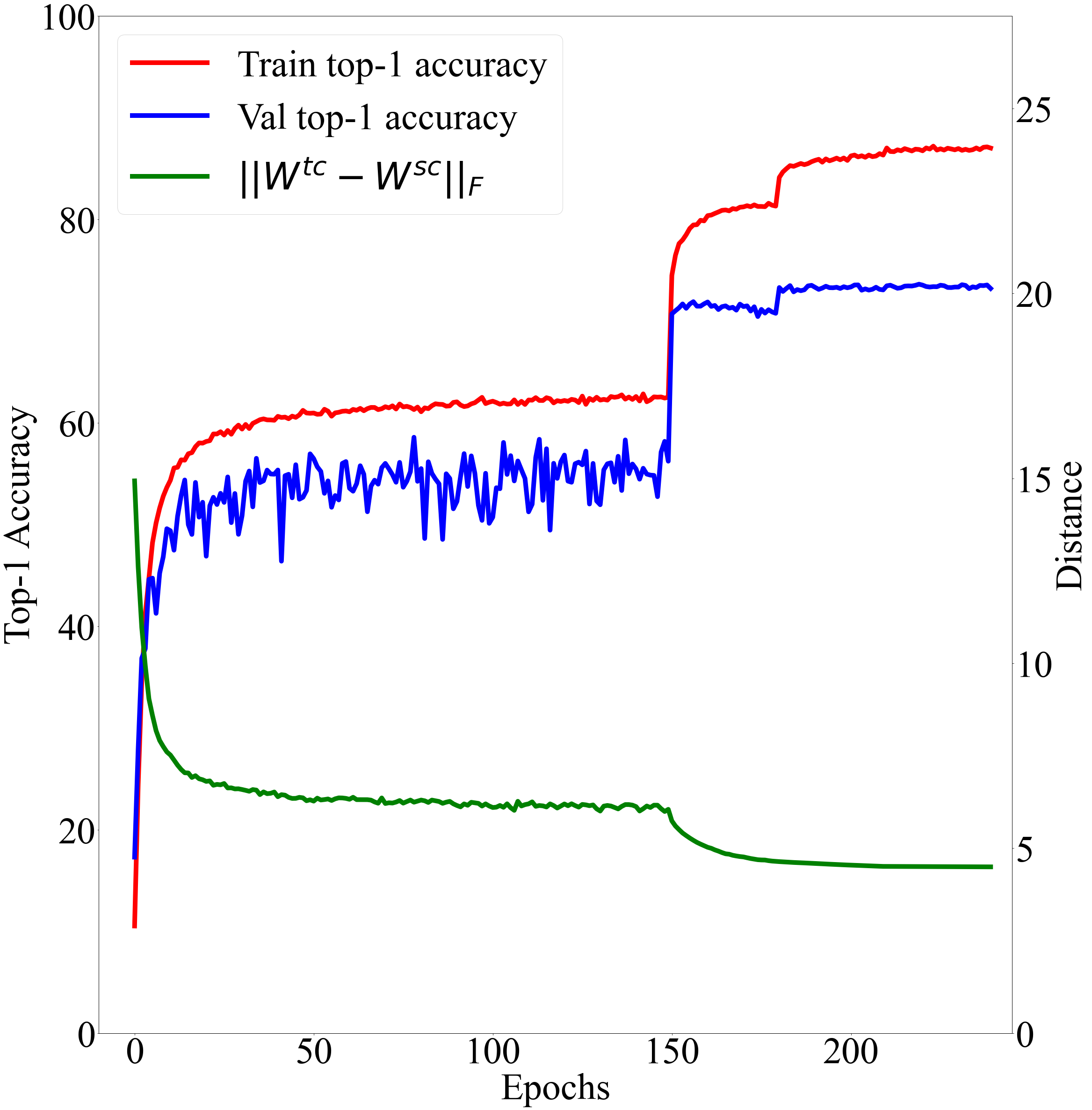}
    }
    \subfigure[Teacher: ResNet110; Student: ResNet20]{
        \includegraphics[width=.3\textwidth]{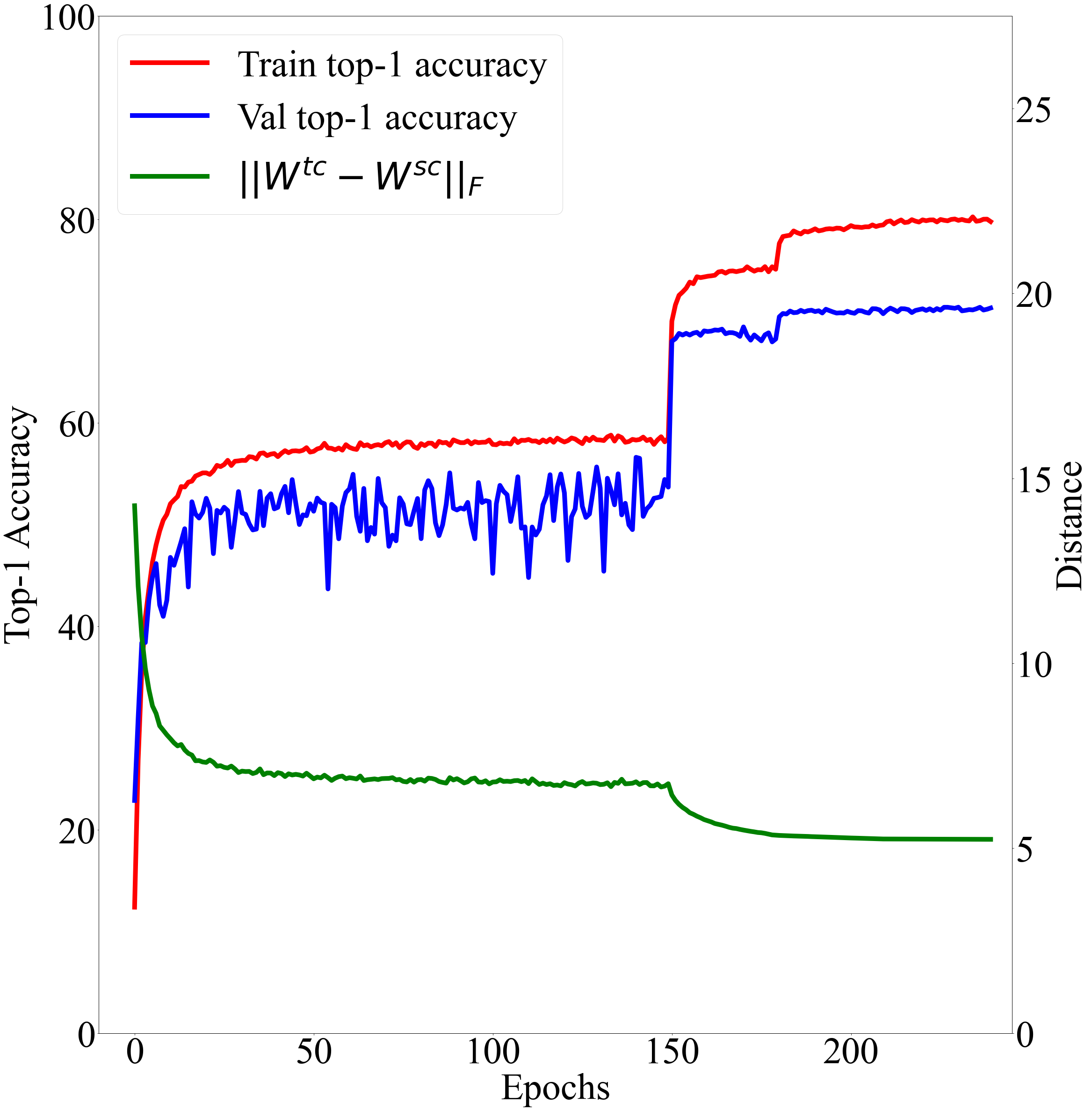}
    }
    \caption{Frobenius Norm of the difference of the weight of teacher's classifier ($W^{tc}$) and student's classifier ($W^{sc}$)  during SRRL training.}
    \label{fig:dist}
\end{figure}

\subsubsection{\textcolor{black}{Alternative Implementations of IJCKD}}

\textcolor{black}{To further assess the versatility of the IJCKD framework, two alternative implementations are presented, extending beyond the mere reuse of the teacher's classifier. Each variant possesses unique characteristics and specific implementation details. The first alternative involves the online training of a joint classifier, as detailed in Algorithm \ref{alg:joint}. This method emphasizes the use of a jointly trained classifier to align teacher and student network representations. The procedure encompasses several critical steps: extracting features from both the student and teacher networks, generating logits using the joint classifier for both networks, and computing logits matching loss between the student's and teacher's logits. Additionally, the cross-entropy losses for both sets of logits are calculated and scaled by the factor $\alpha$. In this implementation, $\alpha$ is set to 0.2, and $\beta$ for scaling the logits matching loss is set to 1.0. This value was chosen based on the premise that the teacher's representations are already well-trained, and a higher scale of the teacher's cross-entropy loss would better balance the overall scale.}

\textcolor{black}{The second variant, as illustrated in Algorithm \ref{alg:penalty}, incorporates a distance penalty to bridge the gap between the teacher and student classifiers in the distillation process. This implementation consists of several key components: extracting logits and features directly from both the student and teacher networks, computing logits matching loss between the two, and calculating the cross-entropy loss based on the student's logits and the provided hard label. A distinctive element of this approach is the introduction of a distance penalty, determined by the norm of the discrepancy in weights of the final fully connected layers of both networks. We simply set both the two scaling factors $\alpha$ and $\beta$ to 1. This method focuses on reducing the differences between the teacher and student classifiers, guiding the student classifier towards an approximation of the ideal joint classifier.}

\textcolor{black}{Table \ref{tabel:alternatives} presents a comparative analysis of three different implementations of IJCKD across two network pairs, ResNet32x4-ResNet8x4 and WRN-40-2-WRN-40-1, on the CIFAR-100 dataset. For the ResNet32x4-ResNet8x4 pairing, the method utilizing the teacher's classifier reached a top-1 accuracy of 76.52\%, with the joint training approach marginally surpassing it at 76.65\%. The distance penalty variant was also effective, achieving a top-1 accuracy of 76.55\%. In the case of the WRN-40-2-WRN-40-1 pair, the teacher's classifier strategy led with a top-1 accuracy of 75.14\%, followed by the joint training method at 75.02\% and the distance penalty method at 74.69\%. These results indicate that IJCKD, as a comprehensive framework, is adaptable to various implementations, each showing promising results.}

\begin{algorithm}
\begin{lstlisting}[style=pytorchstyle]
#input: x, hard label: y
#student network: net_s, teacher network: net_t
#connector for align teacher and student channels
#joint_classifier get by online joint training
_, feat_s = net_s(x)
feat_s = connector(feat_s)
_, feat_t = net_t(x)

logits_t = joint_classifier(avg_pool(feat_t))
logits_s = joint_classifier(avg_pool(feat_s))

loss_lm = F.mse_loss(logits_s, logits_t)

#alpha is used to scale the teacher and student cross entropy losses

loss_ce = alpha*F.cross_entropy(logits_s,y) + (1-alpha)*F.cross_entropy(logits_t,y)

#beta is used to scale the losses
loss_ijckd = loss_ce + beta*loss_lm

\end{lstlisting}
\caption{PyTorch Code for IJCKD with joint training.}
\label{alg:joint}
\end{algorithm}

\begin{algorithm}
\begin{lstlisting}[style=pytorchstyle]
#input: x, hard label: y
#student network: net_s, teacher network: net_t

logits_s, feat_s = net_s(x)
logits_t, feat_t = net_t(x)

loss_lm = F.mse_loss(logits_s, logits_t)
loss_ce = F.cross_entropy(logits_s,y)
dist_penalty = torch.norm(net_t.fc.weight-net_s.fc.weight) # Default: Frobenius norm

#alpha and beta are used to scale the losses

loss_ijckd = loss_ce + alpha*loss_lm + beta*dist_penalty

\end{lstlisting}
\caption{PyTorch Code for IJCKD with distance penalty.}
\label{alg:penalty}
\end{algorithm}

\begin{table}[]
\caption{Comparison of three different implementations of IJCKD.}
\label{tabel:alternatives}
\centering
\begin{tabular}{c|c|c|c|c}
\hline
Teacher    & Student   & Teacher's Classifier & Joint Training & Distance Penalty \\ \hline
ResNet32x4 & ResNet8x4 & 76.52                       & 76.65                 & 76.55        \\ \hline
WRN-40-2   & WRN-40-1  & 75.14                       & 75.02                 & 74.69   \\ \hline
\end{tabular}
\end{table}
\section{Discussion}

\subsection{Scope}  
\textcolor{black}{This paper presents the development and examination of the proposed IJCKD framework, which is built on the observation of previous works, SRRL and SimKD. Its main contribution is the ideal joint classifier assumption, which refines error bounds into optimization objectives and underscores the effectiveness of using the teacher's classifier in knowledge distillation. The IJCKD's adaptability is illustrated through three different implementations based on the ideal joint classifier assumption, showcasing its potential as a broad framework for new distillation algorithms. The IJCKD framework offers a novel perspective in knowledge distillation research but also encourages a more systematic, theory-based approach to developing distillation methods. Acknowledging the foundational nature of this assumption, the paper highlights the IJCKD framework's potential for further exploration and diverse applications, emphasizing its strength in integrating new ideas and deepening the understanding of knowledge distillation.}

\subsection{Understanding the 'Ideal Joint Classifier Assumption'} 
The 'Ideal Joint Classifier Assumption' refers to a classifier that achieves the lowest risk for the student and teacher representations, as described in Eq. \ref{eq:ideal joint classifier} of our paper. In our work, we first adopt the setting of SimKD, where the teacher's classifier is reused as the ideal joint classifier. Note that the classifier in this context refers to an output layer of the neural network (e.g., the layer before the softmax). This choice using the SimKD setting is rooted in the teacher's classifier has already minimized the risk on their own representations; however, we extend this concept by recognizing that the ideal joint classifier should also jointly minimize the risk on the student representations. To achieve this, we include the cross-entropy loss (the first term in Eq. \ref{eq:IJCKD}) to minimize the risk of the teacher's classifier over the student representation with respect to the corresponding hard labels. Since the teacher’s classifier is fixed in this context, we adapt the student representation with respect to the teacher's classifier to ensure that the teacher's classifier achieve the lowest risk on the student representations.

\subsection{IJCKD as a Corrective Step} 
The IJCKD framework can be viewed as a corrective step in comparison to SimKD and SRRL. Where IJCKD distinguishes itself is by acknowledging that the ideal joint classifier should not solely minimize the risk of the teacher's classifier over the teacher representation but should also jointly minimize the risk of the student representation. To achieve this, we introduce the cross-entropy loss (the first term in Eq. \ref{eq:IJCKD}) to minimize the risk of the teacher's classifier concerning the student representation using corresponding hard labels. In this context, the teacher's classifier remains fixed, and the student representation adapts to ensure that the teacher's classifier achieves the lowest risk concerning the student representations. It enforces the reused teacher classifier to become the ideal joint classifier by introducing the cross-entropy loss term. This distinction becomes particularly significant when dealing with scenarios where the reused teacher is not inherently the ideal joint classifier, necessitating a more potent connector to effectively align the teacher and student representations.

\subsection{Limitations and Future Research} 
\textcolor{black}{Although the experimental results validate the IJCKD framework, there are concerns to consider. While IJCKD shares basic principles with SRRL and SimKD, its distinct approach, centered on the ideal joint classifier assumption, leads to unique challenges. A key success factor for these methods is the shared classifier between teacher and student networks. However, merely reusing the teacher's classifier might restrict IJCKD's applicability in certain contexts. The proposed alternative implementations partially mitigate this, yet the intricate balance between classifier discrepancy and adaptability remains less explored. It is essential to delve deeper into this aspect and examine the nuances of classifier sharing and its impact on the versatility and effectiveness of the IJCKD framework. Thus, further investigations could focus on optimizing the balance between maintaining model accuracy and ensuring sufficient flexibility for diverse applications, potentially broadening the scope and utility of IJCKD in various knowledge distillation scenarios.}

\textcolor{black}{Also, there is a perception that the framework might be overly idealized, with concerns about the potential looseness of the error bound it proposes. However, it's important to emphasize that the significance of this error bound transcends its immediate numerical implications. The value of this error bound lies in its role as a fundamental concept, aiding in the comprehension of the complex interactions between teacher and student networks in the knowledge distillation process. It provides a theoretical lens through which the effectiveness and efficiency of knowledge transfer can be analyzed, offering insights into how different network architectures and training strategies might influence the distillation outcome. This understanding is crucial for refining the distillation techniques and advancing the field, pushing beyond mere empirical results to a more nuanced, theory-driven approach. However, future work should aim to refine and tighten this error bound, particularly in scenarios where there is a large discrepancy between the teacher and student representations. This would enhance the precision and applicability of the IJCKD framework, enabling more effective knowledge transfer in diverse and challenging distillation contexts.}

\textcolor{black}{The potential of the IJCKD framework extends beyond its current applications, heralding a new era of research that could revolutionize knowledge distillation across diverse sectors, including industrial inspection and beyond, as referenced in \cite{c2,c3,c4}. Its inherent flexibility showcases its applicability across a myriad of learning environments, fostering a fertile ground for innovative research. This adaptability not only opens new vistas for the development of more advanced and effective distillation methodologies but also invites the exploration of IJCKD's applicability in more specific and targeted areas. Future research should focus on tailoring the IJCKD framework to specific domains, thereby unlocking its full potential in specialized applications. This could involve adapting the framework to unique challenges and data characteristics of different fields, ranging from healthcare and finance to autonomous systems and beyond. Such targeted exploration could lead to highly specialized distillation techniques that are more efficient and effective in their respective areas. The adaptability and versatility of the IJCKD framework make it an ideal candidate for this kind of focused research, promising to yield significant advancements in both the theory and practice of knowledge distillation.}

\section{Conclusion}

In summary, this paper theoretically analyzed the softmax regression-based representation learning, including two representative methods, SRRL and SimKD. We established an error bound which upper bounded the student's error by the teacher's error with the disagreement term between student and teacher output logits under the proposed ideal joint classifier assumption. Further, IJCKD, a novel knowledge distillation framework was proposed that unifies the previous work based on the idea of softmax regression. Our experiments demonstrate the effectiveness of IJCKD, which consistently outperformed state-of-the-art methods on a variety of benchmarks. Our results suggest that IJCKD is a versatile and effective method for knowledge distillation, which can be adapted to various architectures and datasets. Overall, we believe that our work provides a valuable contribution to the field of knowledge distillation and can be applied to a wide range of practical applications.

\section*{Acknowledgement}
This work was supported by grants from the National Heart, Lung, and Blood Institute (\#R21HL159661), and the National Science Foundation (IUCRC \#2052528 and CAREER \#1943552).

\bibliographystyle{elsarticle-num} 
\bibliography{refs}

\begin{thebibliography}{10}
\expandafter\ifx\csname url\endcsname\relax
  \def\url#1{\texttt{#1}}\fi
\expandafter\ifx\csname urlprefix\endcsname\relax\def\urlprefix{URL }\fi
\expandafter\ifx\csname href\endcsname\relax
  \def\href#1#2{#2} \def\path#1{#1}\fi

\bibitem{krizhevsky2017imagenet}
A.~Krizhevsky, I.~Sutskever, G.~E. Hinton, Imagenet classification with deep convolutional neural networks, Communications of the ACM 60~(6) (2017) 84--90.

\bibitem{he2016deep}
K.~He, X.~Zhang, S.~Ren, J.~Sun, Deep residual learning for image recognition, in: Proceedings of the IEEE conference on computer vision and pattern recognition, 2016, pp. 770--778.

\bibitem{long2015fully}
J.~Long, E.~Shelhamer, T.~Darrell, Fully convolutional networks for semantic segmentation, in: Proceedings of the IEEE conference on computer vision and pattern recognition, 2015, pp. 3431--3440.

\bibitem{chen2017deeplab}
L.-C. Chen, G.~Papandreou, I.~Kokkinos, K.~Murphy, A.~L. Yuille, Deeplab: Semantic image segmentation with deep convolutional nets, atrous convolution, and fully connected crfs, IEEE transactions on pattern analysis and machine intelligence 40~(4) (2017) 834--848.

\bibitem{zhao2017pyramid}
H.~Zhao, J.~Shi, X.~Qi, X.~Wang, J.~Jia, Pyramid scene parsing network, in: Proceedings of the IEEE conference on computer vision and pattern recognition, 2017, pp. 2881--2890.

\bibitem{girshick2014rich}
R.~Girshick, J.~Donahue, T.~Darrell, J.~Malik, Rich feature hierarchies for accurate object detection and semantic segmentation, in: Proceedings of the IEEE conference on computer vision and pattern recognition, 2014, pp. 580--587.

\bibitem{redmon2016you}
J.~Redmon, S.~Divvala, R.~Girshick, A.~Farhadi, You only look once: Unified, real-time object detection, in: Proceedings of the IEEE conference on computer vision and pattern recognition, 2016, pp. 779--788.

\bibitem{howard2017mobilenets}
A.~G. Howard, M.~Zhu, B.~Chen, D.~Kalenichenko, W.~Wang, T.~Weyand, M.~Andreetto, H.~Adam, Mobilenets: Efficient convolutional neural networks for mobile vision applications, arXiv preprint arXiv:1704.04861 (2017).

\bibitem{zhang2018shufflenet}
X.~Zhang, X.~Zhou, M.~Lin, J.~Sun, Shufflenet: An extremely efficient convolutional neural network for mobile devices, in: Proceedings of the IEEE conference on computer vision and pattern recognition, 2018, pp. 6848--6856.

\bibitem{han2015deep}
S.~Han, H.~Mao, W.~J. Dally, Deep compression: Compressing deep neural networks with pruning, trained quantization and huffman coding, arXiv preprint arXiv:1510.00149 (2015).

\bibitem{tai2015convolutional}
C.~Tai, T.~Xiao, Y.~Zhang, X.~Wang, et~al., Convolutional neural networks with low-rank regularization, arXiv preprint arXiv:1511.06067 (2015).

\bibitem{bucilu2006model}
C.~Buciluǎ, R.~Caruana, A.~Niculescu-Mizil, Model compression, in: Proceedings of the 12th ACM SIGKDD international conference on Knowledge discovery and data mining, 2006, pp. 535--541.

\bibitem{hinton2015distilling}
G.~Hinton, O.~Vinyals, J.~Dean, Distilling the knowledge in a neural network, arXiv preprint arXiv:1503.02531 (2015).

\bibitem{romero2014fitnets}
A.~Romero, N.~Ballas, S.~E. Kahou, A.~Chassang, C.~Gatta, Y.~Bengio, Fitnets: Hints for thin deep nets, arXiv preprint arXiv:1412.6550 (2014).

\bibitem{zagoruyko2016paying}
S.~Zagoruyko, N.~Komodakis, Paying more attention to attention: Improving the performance of convolutional neural networks via attention transfer, arXiv preprint arXiv:1612.03928 (2016).

\bibitem{chen2021cross}
D.~Chen, J.-P. Mei, Y.~Zhang, C.~Wang, Z.~Wang, Y.~Feng, C.~Chen, Cross-layer distillation with semantic calibration, in: Proceedings of the AAAI Conference on Artificial Intelligence, Vol.~35, 2021, pp. 7028--7036.

\bibitem{chen2021distilling}
P.~Chen, S.~Liu, H.~Zhao, J.~Jia, Distilling knowledge via knowledge review, in: Proceedings of the IEEE/CVF Conference on Computer Vision and Pattern Recognition, 2021, pp. 5008--5017.

\bibitem{tung2019similarity}
F.~Tung, G.~Mori, Similarity-preserving knowledge distillation, in: Proceedings of the IEEE/CVF International Conference on Computer Vision, 2019, pp. 1365--1374.

\bibitem{tian2019contrastive}
Y.~Tian, D.~Krishnan, P.~Isola, Contrastive representation distillation, arXiv preprint arXiv:1910.10699 (2019).

\bibitem{zhou2021rethinking}
H.~Zhou, L.~Song, J.~Chen, Y.~Zhou, G.~Wang, J.~Yuan, Q.~Zhang, Rethinking soft labels for knowledge distillation: A bias-variance tradeoff perspective, arXiv preprint arXiv:2102.00650 (2021).

\bibitem{zhao2022decoupled}
B.~Zhao, Q.~Cui, R.~Song, Y.~Qiu, J.~Liang, Decoupled knowledge distillation, in: Proceedings of the IEEE/CVF Conference on computer vision and pattern recognition, 2022, pp. 11953--11962.

\bibitem{yang2021knowledge}
J.~Yang, B.~Martinez, A.~Bulat, G.~Tzimiropoulos, et~al., Knowledge distillation via softmax regression representation learning, International Conference on Learning Representations (ICLR), 2021.

\bibitem{chen2022knowledge}
D.~Chen, J.-P. Mei, H.~Zhang, C.~Wang, Y.~Feng, C.~Chen, Knowledge distillation with the reused teacher classifier, in: Proceedings of the IEEE/CVF conference on computer vision and pattern recognition, 2022, pp. 11933--11942.

\bibitem{ben2010theory}
S.~Ben-David, J.~Blitzer, K.~Crammer, A.~Kulesza, F.~Pereira, J.~W. Vaughan, A theory of learning from different domains, Machine learning 79~(1) (2010) 151--175.

\bibitem{hornik1989multilayer}
K.~Hornik, M.~Stinchcombe, H.~White, Multilayer feedforward networks are universal approximators, Neural networks 2~(5) (1989) 359--366.

\bibitem{scarselli1998universal}
F.~Scarselli, A.~C. Tsoi, Universal approximation using feedforward neural networks: A survey of some existing methods, and some new results, Neural networks 11~(1) (1998) 15--37.

\bibitem{kim2021comparing}
T.~Kim, J.~Oh, N.~Kim, S.~Cho, S.-Y. Yun, Comparing kullback-leibler divergence and mean squared error loss in knowledge distillation, arXiv preprint arXiv:2105.08919 (2021).

\bibitem{gou2023multi}
J.~Gou, X.~Xiong, B.~Yu, L.~Du, Y.~Zhan, D.~Tao, Multi-target knowledge distillation via student self-reflection, International Journal of Computer Vision (2023) 1--18.

\bibitem{heo2019comprehensive}
B.~Heo, J.~Kim, S.~Yun, H.~Park, N.~Kwak, J.~Y. Choi, A comprehensive overhaul of feature distillation, in: Proceedings of the IEEE/CVF International Conference on Computer Vision, 2019, pp. 1921--1930.

\bibitem{park2019relational}
W.~Park, D.~Kim, Y.~Lu, M.~Cho, Relational knowledge distillation, in: Proceedings of the IEEE/CVF Conference on Computer Vision and Pattern Recognition, 2019, pp. 3967--3976.

\bibitem{krizhevsky2009learning}
A.~Krizhevsky, G.~Hinton, et~al., Learning multiple layers of features from tiny images (2009).

\bibitem{deng2009imagenet}
J.~Deng, W.~Dong, R.~Socher, L.-J. Li, K.~Li, L.~Fei-Fei, Imagenet: A large-scale hierarchical image database, in: 2009 IEEE conference on computer vision and pattern recognition, Ieee, 2009, pp. 248--255.

\bibitem{paszke2019pytorch}
A.~Paszke, S.~Gross, F.~Massa, A.~Lerer, J.~Bradbury, G.~Chanan, T.~Killeen, Z.~Lin, N.~Gimelshein, L.~Antiga, et~al., Pytorch: An imperative style, high-performance deep learning library, Advances in neural information processing systems 32 (2019).

\bibitem{ma2018shufflenet}
N.~Ma, X.~Zhang, H.-T. Zheng, J.~Sun, Shufflenet v2: Practical guidelines for efficient cnn architecture design, in: Proceedings of the European conference on computer vision (ECCV), 2018, pp. 116--131.

\bibitem{ioffe2015batch}
S.~Ioffe, C.~Szegedy, Batch normalization: Accelerating deep network training by reducing internal covariate shift, in: International conference on machine learning, pmlr, 2015, pp. 448--456.

\bibitem{nair2010rectified}
V.~Nair, G.~E. Hinton, Rectified linear units improve restricted boltzmann machines, in: Proceedings of the 27th international conference on machine learning (ICML-10), 2010, pp. 807--814.

\bibitem{ahn2019variational}
S.~Ahn, S.~X. Hu, A.~Damianou, N.~D. Lawrence, Z.~Dai, Variational information distillation for knowledge transfer, in: Proceedings of the IEEE/CVF Conference on Computer Vision and Pattern Recognition, 2019, pp. 9163--9171.

\bibitem{passalis2018learning}
N.~Passalis, A.~Tefas, Learning deep representations with probabilistic knowledge transfer, in: Proceedings of the European Conference on Computer Vision (ECCV), 2018, pp. 268--284.

\bibitem{c2}
C.~Hu, Y.~Wang, An efficient convolutional neural network model based on object-level attention mechanism for casting defect detection on radiography images, IEEE Transactions on Industrial Electronics 67~(12) (2020) 10922--10930.

\bibitem{c3}
C.~Zhao, C.~Hu, H.~Shao, Z.~Wang, Y.~Wang, Towards trustworthy multi-label sewer defect classification via evidential deep learning, in: ICASSP 2023-2023 IEEE International Conference on Acoustics, Speech and Signal Processing (ICASSP), IEEE, 2023, pp. 1--5.

\bibitem{c4}
C.~Hu, B.~Dong, H.~Shao, J.~Zhang, Y.~Wang, Toward purifying defect feature for multilabel sewer defect classification, IEEE Transactions on Instrumentation and Measurement 72 (2023) 1--11.

\end{thebibliography}

\end{document}